\documentclass{article}

\usepackage{PRIMEarxiv}
\usepackage{booktabs}       
\usepackage{nicefrac}       
\usepackage{microtype}      
\usepackage{xcolor}         
\usepackage{amsmath}
\usepackage{amssymb}
\usepackage{mathtools}
\usepackage{amsthm}
\usepackage{algorithm}
\usepackage{algorithmic}
\usepackage[utf8]{inputenc} 
\usepackage[T1]{fontenc}    
\usepackage{hyperref}       
\usepackage{url}            
\usepackage{booktabs}       
\usepackage{amsfonts}       
\usepackage{nicefrac}       
\usepackage{microtype}      
\usepackage{lipsum}
\usepackage{fancyhdr}       
\usepackage{graphicx}       
\graphicspath{{media/}}     

\pagestyle{fancy}
\thispagestyle{empty}
\rhead{ \textit{ }} 

\fancyhead[LO]{Bayesian Risk-Averse Q-Learning with Streaming Observations}

\title{Bayesian Risk-Averse Q-Learning with Streaming Observations
}

\author{
  Yuhao Wang \\
  School of Industrial and Systems Engineering \\
  Georgia Institute of Technology \\
  Atlanta\\
  \texttt{yuhaowang@gatech.edu} \\
   \And
  Enlu Zhou \\
  School of Industrial and Systems Engineering \\
  Georgia Institute of Technology \\
  Atlanta\\
  \texttt{enlu.zhou@isye.gatech.edu} \\
}
\newtheorem{theorem}{Theorem}[section]
\newtheorem{proposition}[theorem]{Proposition}
\newtheorem{lemma}[theorem]{Lemma}
\newtheorem{corollary}[theorem]{Corollary}
\newtheorem{definition}[theorem]{Definition}
\newtheorem{assumption}[theorem]{Assumption}

\newcommand{\calS}{\mathcal{S}}
\newcommand{\calA}{\mathcal{A}}

\newcommand{\calP}{\mathcal{P}}
\newcommand{\kk}{{(k)}}
\newcommand{\bias}{\text{bias}}

\begin{document}
\maketitle

\begin{abstract}
We consider a robust reinforcement learning problem, where a learning agent learns from a simulated training environment. To account for the model mis-specification between this training environment and the real environment due to lack of data, we adopt a formulation of Bayesian risk MDP (BRMDP) with infinite horizon, which uses Bayesian posterior to estimate the transition model and impose a risk functional to account for the model uncertainty. Observations from the real environment that is out of the agent's control arrive periodically and are utilized by the agent to update the Bayesian posterior to reduce model uncertainty. We theoretically demonstrate that BRMDP balances the trade-off between robustness and conservativeness, and we further develop a multi-stage Bayesian risk-averse Q-learning algorithm to solve BRMDP with streaming observations from real environment. The proposed algorithm learns a risk-averse yet optimal policy that depends on the availability of real-world observations. We provide a theoretical guarantee of strong convergence for the proposed algorithm.
\end{abstract}

\keywords{Q-learning \and risk-averse reinforcement learning \and off-policy learning \and Bayesian risk Markov decision process \and distributionally robust Markov decision process}

\section{Introduction}
\label{sec:intro}
Markov Decision Process (MDP) is widely applied in the Reinforcement Learning (RL) community to model sequential decision-making, where the underlying transition process is a Markov Process for a given policy. The Q-learning algorithm, which was first proposed in \cite{watkins1989learning}, learns 
the optimal Q-function of the infinite-horizon MDP. The  Q-function is a function in terms of the state and action pair $(s,a)$, which denotes the expected reward when action $a$ is taken at the initial state $s$ and optimal action is taken at subsequent stages.  In a Q-learning algorithm, the decision maker updates the value of the Q-function by constantly interacting with the training environment and observing the reward and transition each time. The training environment is often the estimate of the real environment (where the policy is actually implemented) through past observed data. 
Implementing the policy learned from the training environment in the real environment directly can be risky, because of model mis-specification, which is caused by the lack of historical data and possible environment shift. For example, in an inventory management problem where the demand follows some unknown distribution, we have limited observations of past demand realizations or only partially observed data if unfulfilled lost orders are not observed. 

One common approach is to consider a distributionally robust formulation, where one assumes the embedding unknown distribution belongs to some ambiguity set. This ambiguity set is usually chosen as a neighborhood of the reference distribution (estimated from data). 
For example, \cite{si2020distributionally} considered the distributionally robust policy learning for the contextual bandit problems, and \cite{zhou2021finite,panaganti2022sample,yang2022toward} studied the distributionally robust policy learning for the discounted RL setting. In particular, \cite{liu2022distributionally} and  \cite{neufeld2022robust} considered Q-learning methods for distributionally robust RL with a discounted reward, where \cite{liu2022distributionally} constructed the ambiguity set using the Kullback-Leibler (KL) divergence whereas \cite{neufeld2022robust} used the Wasserstein distance; Both works designed the Q-learning algorithm by reformulating the robust Bellman equation in its dual form, and thus, transforming the optimization over the distributions to the optimization over a scalar. 

While distributionally robust formulation  accounts for model mis-specification and provides a robust policy that has relatively good performance over all possible distributions in the ambiguity set, it is also argued to be conservative as a price for robustness, especially when the worst-case scenario is unlikely to happen in reality. 
This motivates us to take a more flexible risk quantification criterion instead of only considering the worst-case performance. \cite{zhou2015simulation,wu2018bayesian} proposed a Bayesian risk optimization (BRO) framework, which adopts the Bayesian posterior distribution, as opposed to an ambiguity set, to quantify distributional parameter uncertainty. With this quantification, a risk functional, which represents the user's attitude towards risk, is imposed on the objective with respect to the posterior distribution. 
In this paper, we take this Bayesian risk perspective and formulate the infinite-horizon Bayesian risk MDP (BRMDP) with discounted reward. 

BRMDP was first proposed in \cite{lin2022bayesian},  where they model the unknown parameter of the MDP via a Bayesian posterior distribution. The posterior distribution is absorbed as an augmented state, together with the original physical state, and is updated at each stage through the realization of the reward and transition at that stage. A risk functional is imposed on the future reward at each stage taken with respect to the posterior distribution at that stage. Before \cite{lin2022bayesian},
using the Bayesian technique in MDP was considered in \cite{duff2002optimal} to deal with parameter uncertainty, where they simply took expectation on the total reward with respect to the posterior distribution and referred to this MDP model as Bayes-adaptive MDP (BAMDP). Based on BAMDP, there is a stream of work on model-based Bayesian RL (e.g., \cite{strens2000bayesian,duff2001monte, wang2005bayesian,poupart2006analytic, castro2007using,ross2007bayes,dearden2013model,osband2013more}). Other model-free Bayesian RL methods include Gaussian process temporal difference learning
\cite{engel2003bayes}, Bayesian policy gradient methods \cite{ghavamzadeh2006bayesian}, and Bayesian actor-critic methods \cite{sutton1984temporal}. We referred to \cite{ghavamzadeh2015bayesian} for a comprehensive overview of Bayesian RL. More recently, \cite{sharma2019robust} and \cite{rigter2021risk} considered risk-sensitive BAMDP by imposing the risk functional on the total reward.   In addition, \cite{delage2010percentile} considered a percentile criterion and formulated a second-order cone programming by assuming a Gaussian random reward. All three works mentioned above took a non-nested formulation i.e., only one risk functional is imposed on the total reward. As pointed out in  \cite{lin2022bayesian}, one benefit of the nested formulation (i.e., a risk functional is imposed at each stage) is that the resulting optimal policy is time consistent, meaning the policy solved at the initial stage remains optimal at any later stage. In contrast, the non-nested formulation does not have this property (see \cite{shapiro2017interchangeability}).   
In this paper, we considered the nested formulation as in \cite{lin2022bayesian}. Notably, \cite{lin2022bayesian} as well as most works of BAMDP considered the planning problem over a finite horizon. The problem can be solved efficiently only when the number of horizons is small since the number of augmented states, i.e.,  possible posterior distributions, increases exponentially as the time horizon increases. In this paper, we formulate the infinite-horizon BRMDP whose state space only contains the physical states. As a result, this enables us to find the stationary (time-invariant) optimal policy. We develop a Q-learning algorithm to learn this stationary optimal policy,  which is quite different from the dynamic programming approach taken by \cite{lin2022bayesian}.  

 On a related note, risk functionals are also widely applied in the literature on safe RL (see \cite{garcia2015comprehensive} for a recent survey). 
In safe RL, risk functionals are used to ensure that the learned policy not only performs well but also avoids dangerous states or actions that can cause large one-stage costs. Risk functionals are typically applied to deal with the intrinsic uncertainty of the MDP, taking into account the known transition probability or reward distribution. In contrast, our work uses risk functionals to account for parametric uncertainty in the transition probability, which can prevent finding the optimal policy. Moreover, in safe RL, the risk functional is usually imposed on some cost function to satisfy some safety constraint, while we impose it on the reward objective. Although both areas use risk functionals, they have different goals and frameworks. 


To summarize, all the pre-mentioned work on robust RL has focused on offline learning where the agent does not interact with the real environment, but only has access to data from the training environment. In contrast, our work utilizes real-world data to update the Bayesian posterior in the offline learning process to reduce the parametric uncertainty. We also differ from Bayesian online learning, where the Bayesian posterior is updated periodically, and another optimal policy is then re-solved (regardless of computational cost) and deployed. Our approach is off-policy learning, where the learning agent receives streaming real-world observations from some behavior policy, and we focus on designing a Q-learning algorithm for the varying BRMDP that possesses statistical validity rather than simply assuming optimality is obtained after each posterior update.

\section{Preliminary and Problem Statement}
\subsection{Standard Reinforcement Learning}
     Consider an RL environment $\mathcal{M}^c = (\mathcal{S},\mathcal{A},\mathcal{P}^c,\mathcal{R},\gamma)$, where $\mathcal{S}$ is a finite state space, $\mathcal{A}$ is a finite action space, $\mathcal{R}: \mathcal{S} \times \calA \times \calS \rightarrow \mathbb{R}$ is the reward function bounded by $\Bar{R} = \max_{s,a,s'} |r(s,a,s')|$, $\calP^c =\{p^c_{s,a}(\cdot)\}_{(s,a)\in \calS \times \calA}$ is the transition probabilities, and $\gamma$ is the discount factor. The standard reinforcement learning aims to find a (deterministic) optimal policy $\pi: \calS \rightarrow \calA$ satisfying 
\begin{align*}
    V^\pi(s) = \mathbb{E}_{s' \sim p^c_{s,\pi(s)}} [r(s,\pi(s),s') + \gamma V^\pi(s') ] 
    = \sup_{a \in \calA} \left\{ \mathbb{E}_{s' \sim p^c_{s,a}} [r(s,a,s') + \gamma V^\pi(s') ]\right\}.
\end{align*}

\subsection{Bayesian Risk Markov Decision Process (BRMDP)} \label{sec:BRMDP}
The true transition probabilities, $\calP^c$, are usually unknown in real problems, and we need to estimate it using observations from the real world. However, as discussed before, model mis-specification due to lack of data can impair the performance of learned policy when deployed in the real environment. Hence, we take a Bayesian approach to estimate the environment and impose a risk functional on the objective with respect to the Bayesian posterior to account for this model (parametric) uncertainty.  
With finite state and action spaces, we can impose a Dirichlet conjugate prior $\phi_{s,a}$ 
 on the transition model of each state-action $(s,a)$ pair and update the Bayesian posterior once we observe a transition from state $s$ to $s'$ by taking action $a$. For details of updating the Dirichlet posterior on (s,a), please see Appendix \ref{sec: Dirichlet}.

The Bayesian posterior provides a quantification of uncertainty about the transition model, which a risk-sensitive decision maker seeks to address by making robust decisions using the current model estimate. This can be done by imposing a risk functional on the future reward at each stage, which maps a random variable to a real value and reflects different attitudes towards risk. 
Given a Dirichlet posterior $\phi = \{\phi_{s,a}\}_{s,a}$ and a risk functional $\rho$, the value function of BRMDP under a policy $\pi$ is defined as 
\begin{equation} \label{eq:value function}
    \begin{aligned}
    V^{\phi,\pi}(s_0) =& \mathbb{E}_{d_0\sim\pi(s_0)}\{ \rho_{p_1 \sim \phi_{s_0,d_0}} \left( \mathbb{E}_{s_1 \sim p_1}[r(s_0,d_0,s_1) + \right. \\
    & \qquad \gamma \mathbb{E}_{d_1\sim\pi(s_1)}\{ \rho_{p_2\sim\phi_{s_1,d_1}}\left( \mathbb{E}_{s_2 \sim p_2}[r(s_1,d_1,s_2)+\right.\\
    & \qquad \qquad \gamma \mathbb{E}_{d_2\sim\pi(s_2)}\{\rho_{p_3 \sim \phi_{s_2,d_2}}\left( \mathbb{E}_{s_3\sim p_3}[r(s_2,d_2,s_3)+ \cdots\right.
\end{aligned}
\end{equation}
We assume the risk functional $\rho$ satisfies the following assumption.
\begin{assumption} \label{assump:risk_function} 
Let $\xi \in \mathbb{P}(\mathbb{R}^n)$ denote  a random vector taking values in $\mathbb{R}^n$, and let $f_i: \mathbb{R}^n \rightarrow \mathbb{R}, \ i=1,2$ denote two measurable functions. The risk functional satisfies the following conditions:
\begin{enumerate}
    \item $\rho(\gamma f_1(\xi)) = \gamma \rho(f_1(\xi))$ for all $\gamma \ge 0$. \label{assmp:positive_lambda}
    \item $\rho(f_1(\xi)) \ge \rho(f_2(\xi))$ if $f_1(\xi) \ge f_2(\xi)$ almost surely.\label{assump:monotonicity }
    \item $|\rho(f_1(\xi)) - \rho(f_2(\xi))| \le ||f_1(\xi)-f_2(\xi)||_{\xi,\infty}$, where $||\cdot||_{\xi,\infty}$ is the sup norm with probability measure  $\mathbb{P}$. \label{assump:regularity}
    \item $\rho(f_1(\xi) + C) = \rho(f_1(\xi)) + C$ for all constant $C$. \label{assump:constant}
\end{enumerate}
\end{assumption}
Risk functionals satisfying Assumption \ref{assump:risk_function} are similar to the coherent risk measures (see \cite{artzner1999coherent}), except that the sub-additivity is replaced by Assumption \ref{assump:risk_function}.\ref{assump:regularity}. Notice that all the coherent risk measures, e.g., CVaR, satisfy Assumption \ref{assump:risk_function}.\ref{assump:regularity} since
\begin{align*}
    |\rho(f_1(\xi)) - \rho(f_2(\xi))| \le |\rho(f_1(\xi) - f_2(\xi))| 
    \le \rho(||f_1(\xi)-f_2(\xi)||_{\xi,\infty}) 
    = ||f_1(\xi)-f_2(\xi)||_{\xi,\infty}.
\end{align*}
We relax the assumption of coherent risk measure to include the risk measure VaR, which does not admit the sub-additivity but satisfies Assumption \ref{assump:risk_function}.\ref{assump:regularity}.

Notice \eqref{eq:value function} takes a nested formulation, where the risk measure is imposed on the future reward at each stage as opposed to the non-nested formulation where the risk functional is only imposed once on the total reward, i.e., the value function of the classic MDP.  One advantage of the nested formulation is that \eqref{eq:value function} can be expressed in a recursive form:
\begin{equation*}
    V^{\phi,\pi}(s)  = \mathbb{E}_{a\sim\pi(s)} \{\rho_{p\sim\phi_{s,a}} \left( \mathbb{E}_{s' \sim p} [r (s,a,s') + \gamma V^{\phi,\pi} (s') ] \right)\}.
\end{equation*}
This enables us to define the Bellman operator to find the time-invariant optimal policy. Let $\pi^*$ denote the optimal policy such that $V^{\phi,*}:= V^{\phi,\pi^*}$ satisfying
\begin{equation*} \label{eq:optimal value function}
    \begin{aligned}
    V^{\phi,*}(s_0) 
    =\sup_{\pi \in \Pi} &\left\{ \mathbb{E}_{d_0\sim\pi(s_0)}\{ \rho_{p_1 \sim \phi_{s_0,d_0}} \left( \mathbb{E}_{s_1 \sim p_1}[r(s_0,d_0,s_1)  +\right. \right.\\
    & \qquad \gamma \mathbb{E}_{d_1\sim\pi(s_1)}\{ \rho_{p_2\sim\phi_{s_1,d_1}}\left( \mathbb{E}_{s_2 \sim p_2}[r(s_1,d_1,s_2)+\right.\\
    &\qquad\qquad\gamma \mathbb{E}_{d_2\sim\pi(s_2)}\{\rho_{p_3 \sim \phi_{s_2,d_2}}\left( \mathbb{E}_{s_3\sim p_3}[r(s_2,d_2,s_3)+ \cdots\right.
\end{aligned}
\end{equation*}
where $\Pi$ contains all randomized (and deterministic) policies.
Let $\mathcal{L}^\phi$ be the Bellman operator such that
\begin{align*}
    \mathcal{L}^\phi V (s) = \max_{a\in\calA}\rho_{p\sim\phi_{s,a}}(\mathbb{E}_{s' \sim p} [r(s,a,s')+\gamma V(s')]).
\end{align*}
The following theorem ensures that  $\mathcal{L}^\phi$ is a contraction mapping and BRMDP admits an optimal value function which is the unique fixed point of $\mathcal{L}^\phi$. Its proof is in the Appendix.

\begin{theorem} \label{thm:BRMDP}
BRMDPs possess the following properties:
\begin{enumerate}
    \item $\mathcal{L}^\phi$ is a contraction mapping with $||\mathcal{L}^\phi V - \mathcal{L}^\phi U||_\infty \le \gamma ||V-U||_\infty$, where $||\cdot||_\infty$ is the sup norm in $\mathbb{R}^{|\calS|}$.
    \item There exists a unique $V^{\phi,*}$ such that $\mathcal{L}^\phi V^{\phi,*} = V^{\phi,*}.$ Moreover,
    \begin{align*}
    V^{\phi,*}(s_0) = \sup_{\pi \in \Pi}&\{ \mathbb{E}_{d_0\sim\pi(s_0)} [ \rho_{p_1 \sim \phi_{s_0,d_0}} \left( \mathbb{E}_{s_1 \sim p_1}[r(s_0,d_0,s_1) + \right. \\
    & \qquad \gamma  \mathbb{E}_{d_1\sim\pi(s_1)}[\rho_{p_2 \sim \phi_{s_1,d_1}}\left( \mathbb{E}_{s_2 \sim p_2 }[r(s_1,d_1,s_2)+\right.\\
    & \qquad\qquad\gamma \mathbb{E}_{d_2\sim\pi(s_2)}[\rho_{p_3 \sim \phi_{s_2,d_2}}\left( \mathbb{E}_{s_3 \sim p_3}[r(s_2,d_2,s_3)+ \ldots\right.
\end{align*}

\end{enumerate}
\end{theorem}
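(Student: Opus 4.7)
For any $V, U \in \mathbb{R}^{|\calS|}$ and any state $s$, I would apply the elementary inequality $|\max_a f(a) - \max_a g(a)| \le \max_a |f(a) - g(a)|$ to reduce the claim to bounding, for each fixed $(s,a)$, the quantity $|\rho_{p\sim\phi_{s,a}}(\mathbb{E}_{s'\sim p}[r+\gamma V]) - \rho_{p\sim\phi_{s,a}}(\mathbb{E}_{s'\sim p}[r+\gamma U])|$. By Assumption \ref{assump:risk_function}.\ref{assump:regularity} this is at most $\|\gamma \mathbb{E}_{s'\sim p}[V(s')-U(s')]\|_{\phi_{s,a},\infty}$. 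Since any realization of $p$ drawn from $\phi_{s,a}$ is a probability vector on $\calS$, for each such $p$ one has $|\gamma \sum_{s'} p(s')(V(s')-U(s'))| \le \gamma \|V-U\|_\infty$, which yields the desired contraction. Banach's fixed-point theorem on the complete space $(\mathbb{R}^{|\calS|},\|\cdot\|_\infty)$ then produces a unique $V^{\phi,*}$ with $\mathcal{L}^\phi V^{\phi,*} = V^{\phi,*}$.

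\textbf{Part 2 (identification with the supremum over policies).} I would introduce the policy-evaluation operator $T^\pi V(s) := \mathbb{E}_{a\sim\pi(s)}[\rho_{p\sim\phi_{s,a}}(\mathbb{E}_{s'\sim p}[r(s,a,s')+\gamma V(s')])]$ for each $\pi \in \Pi$. The same contraction argument (now with $|\mathbb{E}_{a\sim\pi(s)}[X_a] - \mathbb{E}_{a\sim\pi(s)}[Y_a]| \le \max_a |X_a - Y_a|$ in place of the max-of-difference step) shows $T^\pi$ is a $\gamma$-contraction, with a unique fixed point I would then identify with the nested expression $V^{\phi,\pi}(s_0)$ of \eqref{eq:value function} by unrolling $(T^\pi)^n V_0$ from any bounded $V_0$; this produces exactly the first $n$ levels of the nested formula plus a tail controlled by $\gamma^n \bar R/(1-\gamma)$ via Assumption \ref{assump:risk_function}.\ref{assump:regularity} and \ref{assump:risk_function}.\ref{assump:constant}, which vanishes as $n \to \infty$.

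\textbf{Comparison and greedy policy.} I would then establish two ingredients: (i) $\mathcal{L}^\phi V \ge T^\pi V$ pointwise for every $\pi$ and $V$, because a $\pi(s)$-weighted average is at most the maximum over $a$; and (ii) monotonicity of both operators in $V$, which follows stage-wise from Assumption \ref{assump:risk_function}.\ref{assump:monotonicity } applied to $r+\gamma V \ge r+\gamma U$. Combining (i) and (ii) by induction gives $(\mathcal{L}^\phi)^n V_0 \ge (T^\pi)^n V_0$ for all $n$ and any $V_0$; passing to the limit yields $V^{\phi,*} \ge V^{\phi,\pi}$ for every $\pi \in \Pi$. For the reverse inequality, I would select the greedy deterministic policy $\pi^*(s) \in \arg\max_a \rho_{p\sim\phi_{s,a}}(\mathbb{E}_{s'\sim p}[r + \gamma V^{\phi,*}(s')])$. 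Then $T^{\pi^*} V^{\phi,*} = \mathcal{L}^\phi V^{\phi,*} = V^{\phi,*}$, so by uniqueness of the fixed point of $T^{\pi^*}$ we get $V^{\phi,*} = V^{\phi,\pi^*}$, whence $V^{\phi,*} = \sup_\pi V^{\phi,\pi}$, which is the displayed identity.

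\textbf{Main obstacle.} The technically delicate step is part 2's identification: the nested expression in \eqref{eq:value function} contains infinitely many interleaved risk functionals and expectations, and because $\rho$ need not be linear or even sub-additive beyond Assumption \ref{assump:risk_function}.\ref{assump:regularity}, one cannot simply invoke dominated convergence. The regularity and constant-shift properties are what let me control the truncation error in sup norm by $\gamma^n \bar R/(1-\gamma)$ and conclude that the limit of the finite-stage nested expressions agrees with the fixed point of $T^\pi$. Everything else is a straightforward adaptation of the classical Bellman argument.
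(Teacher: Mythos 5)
Your proposal is correct and follows essentially the same route as the paper: Assumption \ref{assump:risk_function}.\ref{assump:regularity} yields the $\gamma$-contraction (you via the max-difference inequality, the paper via an $\varepsilon$-optimal action), Banach's theorem gives the unique fixed point, and the identification with the nested value proceeds through the policy-evaluation operator $T^\pi$ (the paper's $\mathcal{L}^\phi_{\pi'}$) together with the $\gamma^k$ tail bound and the observation that randomized policies cannot beat the pointwise maximum. One small point in your favor: you explicitly close the argument with the greedy policy $\pi^*$ to obtain $V^{\phi,*} \le \sup_{\pi} V^{\phi,\pi}$, a step the paper's written proof leaves implicit since it only establishes $V^{\phi,*} \ge V^{\phi,\pi'}$ for arbitrary $\pi'$.
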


\subsection{BRMDP with VaR and CVaR}
Among choices of risk functionals that satisfy Assumption \ref{assump:risk_function}, we adopt two of the most commonly used risk functionals, VaR and CVaR. For a random variable $X$ properly defined on some probability space, VaR$_\alpha$ is the $\alpha$-quantitle of $X$,  VaR$_\alpha(X) = \inf\{z|\mathbb{P}(X\le z) \ge \alpha\}$, and CVaR$_\alpha$ normally is defined as the mean of $\alpha$-tail distribution of $X$. However, since we are imposing the risk functional on the reward (regarded as the negative of loss) rather than the loss, we define $\text{CVaR}_\alpha(X) = \frac{1}{\alpha} \int_{-\infty}^{\text{VaR}_\alpha(X)} X \mathbb{P}(\mathrm{d}x)$, which is the conditional expectation for $X \le \text{VaR}_\alpha$. 

Before discussing solving the BRMDP, a natural question is how this risk-averse formulation is related to the original risk-neutral objective. Let $V^{c,\pi}$ denote the value function of policy $\pi$ under the true MDP $\mathcal{M}^c$. the difference between $V^{\phi,\pi}$ and $V^{c,\pi}$ is bounded by the following theorem. 
\begin{theorem} \label{thm: model difference}
         Suppose the Bayesian prior $\phi^0_{s,a}(s') =1, \forall s,a,s'$ and $\rho$ is either VaR$_\alpha$ or CVaR$_\alpha$. Let $\pi$ be a deterministic policy.  Let $\bar{O} = \min_{s\in\calS} O_{s,\pi(s)}$ and $O_{s,a}$ be the number of  observed transitions from s with action $\pi(s)$ in the dataset. Assuming $O_{s,a}>0$ for all $(s,a)$, then with probability at least $1-\bar{O}^{-\frac{1}{3}}\sqrt{\frac{|\calS|}{\alpha}}$, 
        \begin{equation*}
            \| V^{\phi,\pi} - {V}^{c,\pi} \|_\infty \le  \bar{O}^{-\frac{1}{3}} \sqrt{\frac{|\calS|}{\alpha}} \frac{5|\calS|\bar{R}}{(1-\gamma)^2},
        \end{equation*}

        In particular, let $\underline{O} = \min_{s\in\calS,a\in\calA} O_{s,a}$,  then with probability at least $1-\underline{O}^{-\frac{1}{3}}\sqrt{\frac{|\calS|}{\alpha}}$, 
        $$ \|V^{\phi,*} - V^{c,*}\|_\infty \le 
            \underline{O}^{-\frac{1}{3}} \sqrt{\frac{|\calS|}{\alpha}} \frac{5|\calS|\bar{R}}{(1-\gamma)^2} \ 
        $$
        where $V^{c,*}$ is the optimal value function for $\mathcal{M}^c$. 
\end{theorem}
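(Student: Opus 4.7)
The plan is to reduce the value-function discrepancy to a per-state one-step posterior-vs-true bound via the Bellman contraction, and then control that one-step bound using concentration of the Dirichlet posterior combined with a VaR/CVaR perturbation estimate. For the fixed-policy claim, start from the Bellman equations for $V^{\phi,\pi}$ and $V^{c,\pi}$ and apply the regularity property \ref{assump:regularity} of Assumption \ref{assump:risk_function} to the difference $g_V(p) - g_{V'}(p) = \gamma\,\mathbb{E}_{s'\sim p}[V(s')-V'(s')]$, which is uniformly bounded in $p$ by $\gamma\|V-V'\|_\infty$. This yields
\begin{equation*}
|V^{\phi,\pi}(s) - V^{c,\pi}(s)| \;\le\; \gamma\|V^{\phi,\pi} - V^{c,\pi}\|_\infty + \Delta_\pi(s),
\end{equation*}
where $\Delta_\pi(s) := \bigl|\rho_{p\sim\phi_{s,\pi(s)}}(g_s(p)) - g_s(p^c_{s,\pi(s)})\bigr|$ with $g_s(p) := \mathbb{E}_{s'\sim p}[r(s,\pi(s),s') + \gamma V^{c,\pi}(s')]$; taking $\sup_s$ and rearranging gives $\|V^{\phi,\pi} - V^{c,\pi}\|_\infty \le \max_s \Delta_\pi(s)/(1-\gamma)$. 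The optimal-value bound follows from the same recursion with $\mathcal{L}^\phi$ and $\mathcal{L}^c$ in place, using $|\max_a f(a) - \max_a g(a)| \le \max_a |f(a)-g(a)|$, so one needs a uniform one-step bound over $(s,a)$, which is why $\bar O$ is replaced by $\underline O$.

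To bound $\Delta_\pi(s)$, split
\begin{equation*}
\Delta_\pi(s) \;\le\; |\rho(g_s(p)) - g_s(\bar p)| + |g_s(\bar p) - g_s(p^c)|,
\end{equation*}
where $\bar p := \mathbb{E}_{p\sim\phi_{s,\pi(s)}}[p]$ is the posterior mean. Since $g_s$ is linear in $p$ with $|g_s|\le M := \bar R/(1-\gamma)$, the second summand is bounded by $M\|\bar p-p^c\|_1$; with the uniform Dirichlet prior, $\|\bar p-\hat p\|_1 = O(|\calS|/O_{s,\pi(s)})$ holds deterministically, and a Hoeffding-type bound with a union bound over $s'$ gives $\|\hat p-p^c\|_1 = O\bigl(\sqrt{|\calS|/O_{s,\pi(s)}}\bigr)$ with high probability, controlling the second summand. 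For the first summand, use the Dirichlet variance identity
\begin{equation*}
\mathrm{Var}_\phi(g_s(p)) \;=\; \frac{\mathrm{Var}_{\bar p}\bigl(r(s,\pi(s),\cdot)+\gamma V^{c,\pi}(\cdot)\bigr)}{\alpha_0+1} \;\le\; \frac{M^2}{\alpha_0+1},
\end{equation*}
with $\alpha_0 = |\calS|+O_{s,\pi(s)}$, and apply Chebyshev to get $\mathbb{P}_\phi(|g_s(p)-g_s(\bar p)|\ge t) \le \delta(t) := M^2/((\alpha_0+1)t^2)$. For $\rho=\mathrm{VaR}_\alpha$ the definition of the $\alpha$-lower-quantile yields $|\mathrm{VaR}_\alpha(g_s(p))-g_s(\bar p)|\le t$ whenever $\delta(t)<\min(\alpha,1-\alpha)$; for $\rho=\mathrm{CVaR}_\alpha$ the $1/\alpha$ tail normalization plus $|g_s|\le M$ gives $|\mathrm{CVaR}_\alpha(g_s(p))-g_s(\bar p)|\le t+2M\delta(t)/\alpha$. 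Choosing $t$ and $\delta$ so that the resulting one-step error and its failure probability both match $\bar O^{-1/3}\sqrt{|\calS|/\alpha}$ (up to the constant $5$) and dividing by $1-\gamma$ from the recursion gives the stated bound; the optimal-value case is identical with $\underline O$ replacing $\bar O$ via a uniform bound over $(s,a)$.

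The main obstacle I anticipate is the CVaR step and the attendant parameter balancing. Because $\mathrm{CVaR}_\alpha$ integrates over the mass-$\alpha$ lower tail and renormalizes by $1/\alpha$, a low-probability posterior deviation event contributes an error of order $M\delta/\alpha$ rather than the usual $\delta$, so minimizing $t+2M\delta(t)/\alpha$ subject to $\delta(t)\propto 1/((\alpha_0+1)t^2)$ is what forces a cube-root rate $t_*\sim M/((\alpha_0+1)\alpha)^{1/3}$ and simultaneously determines the probability $\bar O^{-1/3}\sqrt{|\calS|/\alpha}$ and the one-step error magnitude. The delicate part is then verifying that this choice is compatible with the simultaneous high-probability bound $\|\bar p-p^c\|_1 \lesssim \sqrt{|\calS|/O}$ and that the surrounding algebra reproduces the exact $\sqrt{|\calS|/\alpha}$ prefactor and the constant $5|\calS|\bar R/(1-\gamma)^2$ in the stated inequality.
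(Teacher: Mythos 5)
Your proposal is correct and shares the paper's overall architecture — use the $\gamma$-contraction to reduce $\|V^{\phi,\pi}-V^{c,\pi}\|_\infty$ to a one-step perturbation bound (the source of the extra $(1-\gamma)^{-1}$), then control that one-step error by posterior concentration plus a VaR/CVaR stability estimate, and pass to optimal values via a uniform bound over $(s,a)$ — but the execution of the concentration step is genuinely different. The paper applies Chebyshev coordinate-wise to $p(s')-\bar p_{s,a}(s')$ at deviation level $3\sqrt{|\calS|/\alpha}\,O_{s,a}^{-1/3}$, union-bounds over $s'$ to produce a posterior-probability-$(1-\alpha)$ sup-norm ball $U$ around $p^c_{s,a}$, and traps $\mathrm{VaR}_\alpha$ between $\inf_U f$ and $\sup_U f$ (for CVaR, splitting the tail integral over $U$ and $U^c$); the exponent $-1/3$ is chosen by hand so that the deviation magnitude and the Chebyshev failure probability over the data both decay at that rate. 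You instead apply Chebyshev directly to the scalar $g_s(p)$ via the exact Dirichlet variance of a linear functional, $\mathrm{Var}_{\phi_{s,a}}(g_s(p))\le M^2/(\alpha_0+1)$ with $M=\bar R/(1-\gamma)$, which avoids the $|\calS|$-fold union bound and cleanly separates the (data-deterministic) posterior-spread term from the (random) mean-versus-truth term; the cube root then emerges intrinsically from optimizing $t+2M\delta(t)/\alpha$ in the CVaR tail bound, and your resulting one-step errors (order $M(\alpha O)^{-1/3}$ for CVaR, $M(\alpha O)^{-1/2}$ for VaR) sit inside the stated $5|\calS|\sqrt{|\calS|/\alpha}\,O^{-1/3}M$ budget. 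Your operator route for the second claim, via $|\max_a f(a)-\max_a g(a)|\le\max_a|f(a)-g(a)|$, replaces the paper's two-sided policy sandwich $V^{\phi,*}\ge V^{\phi,\pi^c}\ge V^{c,\pi^c}-\mathrm{err}$ and $V^{c,\pi^c}\ge V^{c,\pi^*}\ge V^{\phi,*}-\mathrm{err}$; both are valid and both correctly force $\underline O$ in place of $\bar O$. Two details to tighten: (i) $\|\hat p-p^c\|_1=O(\sqrt{|\calS|/O})$ does not follow from a plain Hoeffding-plus-union bound (that yields $|\calS|\sqrt{\log(|\calS|/\delta)/O}$); use coordinate-wise Chebyshev as the paper does, or a Cauchy--Schwarz bound on $\mathbb{E}\|\hat p-p^c\|_1$ — either version is still dominated by the $O^{-1/3}$ budget; (ii) your VaR condition $\delta(t)<\min(\alpha,1-\alpha)$ degrades for $\alpha$ near $1$ and would not reproduce the stated $\sqrt{1/\alpha}$ dependence there, though the paper's own sandwich has the same implicit restriction and the intended regime is small $\alpha$.
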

 Theorem \ref{thm: model difference} implies our reformulated BRMDP coincides with the risk-neutral MDP as more observations are available. In particular, the discrepancy is characterized in terms of the minimal number of observations for any state-action pair with the order at least $\underline{O}^{-\frac{1}{3}}$. This indicates the BRMDP automatically balances the trade-off between robustness and conservativeness with varying numbers of observations. We defer the proof to the Appendix.
 
\subsection{Q-Learning for BRMDP} \label{sec:q learning}
Section \ref{sec:BRMDP} ensures BRMDP is well-formulated, that is, it admits an optimal value function which can be found as the fixed point of its Bellman operator $\mathcal{L}^\phi$. This allows us to derive a Q-Learning algorithm to learn the optimal policy and value function of BRMDP. Recall that an optimal Q-function is defined as 
\begin{align*}
    Q^{\phi,*}(s,a) 
    = \rho_{p\sim \phi_{s,a}} ( \mathbb{E}_{s' \sim p} [ r(s,a,s') + \gamma V^{\phi,*}(s')])
    = \rho_{p \sim \phi_{s,a}}(\mathbb{E}_{s' \sim p}[r(s,a,s') + \gamma \max_{b\in \calA} Q^{\phi,*}(s',b)]).
\end{align*} 

Let $\mathcal{T}^\phi$ denote the Bellman operator for the Q-function. That is, 
\begin{align*}
    \mathcal{T}^\phi Q(s,a) = \rho_{p \sim \phi_{s,a}}(\mathbb{E}_{s' \sim p}[r(s,a,s') + \gamma \max_{b\in \calA} Q(s',b)]).
\end{align*}
The optimal Q-function $Q^{\phi,*}(s,a)$ then satisfies
$$ Q^{\phi,*}(s,a) = (1-\lambda) Q^{\phi,*}(s,a) + \lambda \mathcal{T}^\phi Q^{\phi,*}(s,a),$$
where $\lambda \in (0,1)$. In a Q-learning algorithm, given a sequence of learning rates $\{\lambda_t\}_t$ and an estimator of the Bellman operator $\widehat{\mathcal{T}}^\phi$, we have the Q-learning update rule 
\begin{equation*}
    Q_{t+1}(s,a) = (1-\lambda_t) Q_t(s,a) + \lambda_t \widehat{\mathcal{T}}^\phi Q_t(s,a).
\end{equation*}
\subsection{Bayesian Risk-Averse Q-Learning with Streaming Data}
 Notice that in Section \ref{sec:BRMDP} and \ref{sec:q learning}, the posterior distribution $\phi$ is fixed across stages. As in many previous works, the embedding model is estimated with a fixed set of past observations before solving the problem. However, this one-time estimation of the transition model does not take into consideration utilizing the new data that arrive later, which helps reduce the model uncertainty as well as the conservativeness caused by risk functional, as indicated by Theorem \ref{thm: model difference}. This motivates us to consider a data-driven framework where we dynamically update the estimate of the model. For this purpose, we consider a multi-stage Bayesian Q-learning algorithm.  

Suppose at the beginning of stage $t$, a batch of observations in the form of three-tuple $(s_i,a_i,s'_i)$ with batch size $n(t)$ is available. The observation can be regarded as generated by some policy that is actually deployed in the real world, which does not depend on the learning process. The decision maker incorporates these new data to update the posterior $\phi^t$, with which we obtain a BRMDP model $\mathcal{M}_t$. We then cary out $m(t)$ steps of Q-learning update, where the initial Q-function in stage $t$ is inherited from the previous stage $t-1$. This framework is shown in Figure \ref{fig:multi-stage}.

\begin{figure}[htbp]
    \centering
    \includegraphics[width=0.8\linewidth]{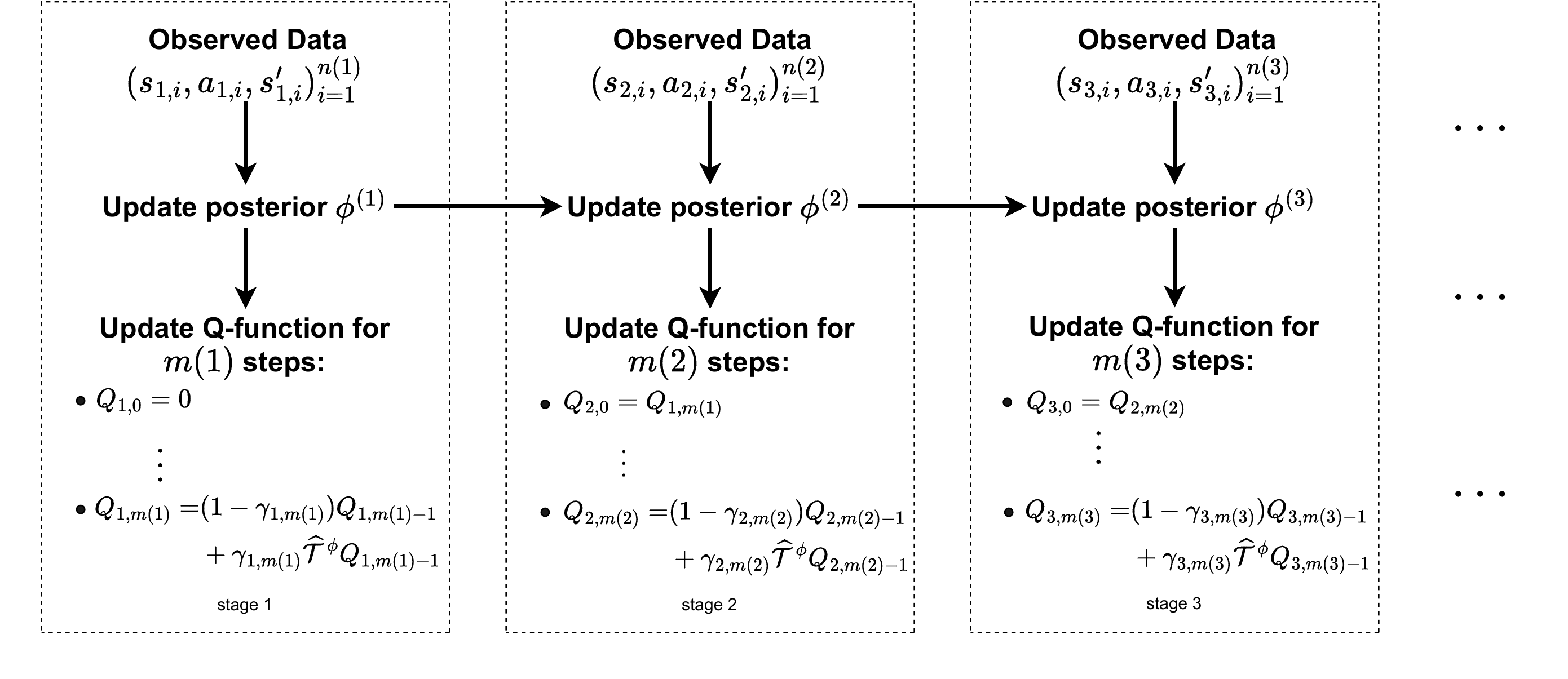}
     \vskip -0.1in
    \caption{Multi-stage Bayesian risk-averse Q-learning}
    \vskip -0.1in
    \label{fig:multi-stage}
\end{figure}
\section{Estimator for Bellman Operator}
In Figure \ref{fig:multi-stage}, a key step is to design a proper estimator $\widehat{\mathcal{T}}^\phi$ for the Bellman operator to ensure the convergence of the Q-function. In most of the existing literature on Q-learning, convergence relies on an unbiased estimator of the Bellman operator. While this is usually easy to obtain with the expectation operator which is linear, in BRMDP unbiased estimator for the Bellman operator is difficult if not impossible to obtain, because of (i) the non-linearity of risk functional (VaR and CVaR) and (ii) varying posterior distributions. Unbiased estimators for nonlinear functionals have been studied in \cite{blanchet2019unbiased}; however, their method cannot be directly applied here since the variance of the estimator is uncontrollable in the existence of varying posteriors. 
Instead, we use  Monto Carlo simulation to obtain an estimator with almost surely diminishing bias. We show in Theorem \ref{thm:estimator bias} and \ref{thm:convergence} that such  Monto Carlo estimator  is sufficient to guarantee the convergence of the Q-function.

\subsection{ Monto Carlo Estimator for VaR and CVaR with Varying Sample Size}

Denote by 
\begin{equation}\label{eq: f}
  \begin{aligned}
    f(p|s,a,Q) 
    = \mathbb{E}_{s'\sim p} [r(s,a,s')+\gamma\max_{b\in\calA}Q(s',b)]
    = \sum_{s' \in \calS} p(s')[r(s,a,s')+\gamma\max_{b\in\calA}Q(s',b)].
\end{aligned}  
\end{equation}
Given a posterior $\phi$ and a Q-function $Q$, we want to estimate
\begin{align*}
    \mathcal{T}^\phi Q(s,a) = \rho_{p\sim\phi_{s,a}}(f(p|s,a,Q)),
\end{align*}
where $\rho$ is either VaR$_\alpha$ or CVaR$_\alpha$ and $\alpha \in (0,1)$ denotes the risk level. A  Monto Carlo estimator for VaR$_\alpha(f(p|s,a,Q))$ and CVaR$_\alpha(f(p|s,a,Q))$ with sample size $N$ can be obtained by first drawing independent and identically distributed (i.i.d.) samples  $p_1,p_2,\ldots,p_N \sim \phi_{s,a}$. Denote by $X_i := f(p_i|s,a,Q)$, $i = 1,2,\ldots,N$. Then
\begin{equation} \label{eq:MC estimator}
     \widehat{\mathcal{T}}^{\phi}_N Q(s,a) = \left\{\begin{aligned}
   & \widehat{\text{VaR}}^{\phi_{s,a}}_{\alpha,N}(f(p|s,a,Q)) := X_{\lceil N\alpha \rceil:N} \ &\text{ if } \rho \text{ is VaR}_\alpha, \\
    &\widehat{\text{CVaR}}^{\phi_{s,a}}_{\alpha,N} (f(p|s,a,Q)) := \frac{1}{\lceil N\alpha \rceil}\sum_{k=1}^{\lceil N\alpha \rceil} X_{k:N} &\text{ if } \rho \text{ is CVaR}_\alpha,
    \end{aligned}\right. 
\end{equation}
where $X_{k:N}$ is the $k$th order statistic out of $N$ samples.
Both estimators are biased with a constant sample size. Increasing the sample size reduces the bias but can be computationally inefficient. As the posterior distribution concentrates more on the true model, even estimators with small sample sizes can have reduced bias. We use a varying sample size that adapts to the updated posterior distribution. Initially, we set $N_{s,a}$ as a fixed minimal sample size $N$, and decrease or increase $N_{s,a}$ by $1$ at the beginning of each stage based on whether a new observation improves or not the estimate for the transition model on $(s,a)$. The multi-stage Bayesian Risk-averse Q-Learning algorithms with VaR (BRQL-VaR) and CVaR (BRQL-CVaR) are presented in Algorithm \ref{alg:multi-stage}.

\begin{algorithm}[tb]
   \caption{Multi-stage Bayesian risk-averse Q-learning}
   \label{alg:multi-stage}
\begin{algorithmic}
   \STATE {\bfseries Input:} State space $\calS$, action space $\calA$, reward function $r$, termination stage $T$, 
   Q-learning update step size $\{m(t)\}_{t=1}^T$, learning rate $\{\lambda_\ell\}_\ell$, prior distribution $\{\phi^0_{s,a}\}_{s\in\calS,a\in\calA}$, minimal sample size $N$,  risk functional $\rho \in$ \{VaR$_\alpha$, CVaR$_\alpha$\} with  risk level $\alpha$. 
   \STATE { \bfseries Initialize} $\phi \leftarrow \phi^0$, $Q(s,a) \leftarrow 0, \ N_{s,a} \leftarrow N \ \forall s,a$ .
   \FOR{$t=1$ {\bfseries to} $T$}
   \STATE Obtain $n(t)$ observations $(s_i,a_i,s'_i) \ i=1,\ldots,n(t)$.
   \FOR{$i=1$ {\bfseries to} $n(t)$}
   \FOR{{\bfseries all} $(s,a) \in \calS\times\calA$}
   \STATE  $\phi^t_{s,a}(s') \leftarrow  \phi^{t-1}_{s,a}(s') + \# \{(s_i,a_i,s'_i) = (s,a,s')\}$
   \IF{$\phi^t_{s,a} = \phi^{t-1}_{s,a}$}
   \STATE $N_{s,a} \leftarrow N_{s,a} +1$
   \ELSE 
   \STATE  $N_{s,a} \leftarrow \max \{N_{s,a} -1, N\}$
   \ENDIF
   \ENDFOR
   \ENDFOR
   \FOR{$\ell =1$ {\bfseries to} $m(t)$}
   \STATE $M\leftarrow \sum_{\tau=1}^{t-1} m(t)$, $\lambda_{t,l} \leftarrow \lambda_{ M+ \ell}$
   \FOR{{\bfseries all} $(s,a)   \in \mathcal{S}\times\mathcal{A}$}
  \STATE Generate $p_1,\ldots,p_{N_{s,a}} \sim \phi^t_{s,a}$ 
  \STATE $X_i \leftarrow f(p_i|s,a,Q)$, $i = 1,2,\ldots,N_{s,a}$
   \STATE  $Q'(s,a) \leftarrow (1-\lambda_{t,\ell}) Q(s,a) + \lambda_{t,\ell} \widehat{T}^{\phi^t}_{{N_{s,a}}}Q(s,a)$ using \eqref{eq:MC estimator}.
   \ENDFOR
   \STATE $Q(s,a) \leftarrow Q'(s,a) \ \forall (s,a) \in \calS\times\calA$
   \ENDFOR
   \ENDFOR
   \STATE {\bfseries Output: Q-function $\{Q(s,a)\}_{(s,a)}$}
\end{algorithmic}
\end{algorithm}
\section{Convergence Analysis}
In this section, we give the theoretical guarantee of our proposed multi-stage Bayesian risk-averse Q-learning algorithm, which ensures the asymptotic convergence of the learned Q-function to the ``optimal" Q-function. We
define the random observations from the real environment and all algorithm-related random quantities with respect to a probability space $(\Omega,\mathcal{F}, \mathbb{P})$. 
 
Recall that the real-world transition observations are assumed to be obtained from some behavior policy that is not controlled by the agents. This may possibly bring a definition problem of ``optimal" Q-function. Indeed, if all streaming data are generated by some greedy policy, then the number of transition observations from some state-action pair can remain small, in which case the Bayesian posterior does not converge and the learned policy is risk-averse yet random since it depends on the Bayesian posterior. As in the offline RL, where the policy is learned based on the initial fixed data set, we also define the optimal Q-function as depending on the given real-world data but differ from pure offline RL in that the given data is streaming and random. 
\begin{definition}\label{def:conditional optimal} \textbf{(Data-conditional optimal Q-function)}\\
Given an observation process 
$\omega = \left\{ (s_{t,i},a_{t,i},s'_{t,i})| t = 1,2,\ldots, i=1,2,\ldots,n(t)     \right\}$,  let  $O^\infty_{s,a,s'}(\omega) = \sum_{t=1}^\infty\sum_{i=1}^{n(t)} \mathbf{1}\{(s_{t,i},a_{t,i},s'_{t,i}) = (s,a,s')\}$ 
and $O^\infty_{s,a}(\omega) = (O^\infty_{s,a,s'}(\omega))_{s'\in\calS} $ denote the number of transition observations under $\omega$.
For each $(s,a) \in \calS\times\calA$, define the limiting posterior as
\begin{equation*}
    \phi^{\omega}_{s,a} = \left\{ 
    \begin{aligned}
    &\delta _{p^c_{s,a}}(\cdot)   \quad &\text{if } ||O^\infty_{s,a}(\omega)|| =\infty,\\
    &\text{Dirichlet}\left(\phi^0_{s,a} + O^\infty_{s,a}(\omega)\right) & \text{otherwise},
    \end{aligned}
    \right.
\end{equation*}
where $\delta _{p^c_{s,a}}(\cdot)$ is the Dirac measure centered at true transition probability $p^c_{s,a} = (p^c_{s,a}(s'))_{s'\in\calS}$. $Q^{\omega,*}$ is the optimal solution to BRMDP with ``posterior" $\phi^{\omega}$, which satisfies
$
    Q^{\omega,*} = \mathcal{T}^{\phi^{\omega}} Q^{\omega,*}.
$
The data-optimal Q-function $Q^{\omega,*}$ is a random vector since the observation process is random.
\end{definition}
With this data-conditional optimal criterion, we can now prove the convergence of the multi-stage Bayesian risk-averse Q-learning algorithm. A list of  notations for different Q-functions can be found in Table 1 of the Appendix.

To prove the convergence of $Q_t$,  we prove (i) the convergence of $Q^{\phi^t,*}$ which depends on the convergence of the posterior distribution, and (ii) the convergence of the estimator for the Bellman operator. We summarize two important results in Proposition \ref{prop:posterior convergence} and Theorem \ref{thm:estimator bias}. All the proofs are deferred to the Appendix.

\begin{proposition}\label{prop:posterior convergence} \textbf{(Convergence of $Q^{\phi^t,*}$)}
Let $Q^{\omega,*}$ be defined as in Definition \ref{def:conditional optimal} and $\phi^t$ be the posterior distribution obtained at stage $t$. Then the optimal Q-function under posterior distribution $\phi^t$ converges to $Q^{\omega,*}$ 
almost surely. That is,
\begin{equation*}
    \lim_{t\rightarrow \infty} ||Q^{\phi^t,*} - Q^{\omega,*}||_\infty = 0 \qquad \text{almost surely,} 
\end{equation*}
where $||\cdot||_\infty $ is the entry-wise sup norm.
\end{proposition}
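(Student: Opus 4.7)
The plan is to decompose the convergence into three ingredients: (i) almost-sure convergence of the posterior $\phi^t$ to the limiting object $\phi^\omega$ of Definition \ref{def:conditional optimal}, (ii) pointwise convergence of the Bellman operator $\mathcal{T}^{\phi^t} Q \to \mathcal{T}^{\phi^\omega} Q$ on bounded Q-functions, and (iii) a contraction-based argument that upgrades operator convergence to convergence of the unique fixed points $Q^{\phi^t,*} \to Q^{\omega,*}$.

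For step (i), I would fix a realization $\omega$ and split by $(s,a)$. When $\|O^\infty_{s,a}(\omega)\| < \infty$, only finitely many observations ever update the $(s,a)$-component, so $\phi^t_{s,a}$ is eventually constant and equal to $\phi^\omega_{s,a} = \text{Dirichlet}(\phi^0_{s,a} + O^\infty_{s,a}(\omega))$. When $\|O^\infty_{s,a}(\omega)\| = \infty$, the observed transitions from $(s,a)$, viewed along the subsequence of visits, are i.i.d.\ from $p^c_{s,a}$, so the strong law of large numbers yields almost-sure convergence of the empirical frequency to $p^c_{s,a}$. Since the Dirichlet $\phi^t_{s,a}$ has mean equal to the normalized parameter vector and coordinate-wise variance of order $1/\|O^t_{s,a}\|$, this gives $\phi^t_{s,a} \Rightarrow \delta_{p^c_{s,a}} = \phi^\omega_{s,a}$ weakly, almost surely.

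For step (ii), fix any $Q$ with $\|Q\|_\infty \le \bar{R}/(1-\gamma)$ (an a-priori bound that the fixed points satisfy). The integrand $f(p|s,a,Q)$ from \eqref{eq: f} is linear and bounded in $p$ on the simplex. In the finite-observation case the convergence $\mathcal{T}^{\phi^t}Q(s,a) \to \mathcal{T}^{\phi^\omega}Q(s,a)$ is immediate after the posterior stabilizes. In the infinite-observation case, $p\sim\phi^t_{s,a}$ converges in distribution to $\delta_{p^c_{s,a}}$; since the limit is deterministic, $f(p|s,a,Q)$ converges in probability to the constant $f(p^c_{s,a}|s,a,Q)$. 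For both $\text{VaR}_\alpha$ and $\text{CVaR}_\alpha$, the quantile function of a Dirac is a constant and therefore continuous everywhere, so $\rho_{p\sim\phi^t_{s,a}}(f(p|s,a,Q)) \to f(p^c_{s,a}|s,a,Q) = \rho_{p\sim\phi^\omega_{s,a}}(f(p|s,a,Q))$ by Assumption \ref{assump:risk_function}.\ref{assump:constant}.

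For step (iii), the Q-function Bellman operator $\mathcal{T}^\phi$ is a $\gamma$-contraction in sup norm by the same argument used for $\mathcal{L}^\phi$ in Theorem \ref{thm:BRMDP}. Writing
\[
\|Q^{\phi^t,*} - Q^{\omega,*}\|_\infty \le \|\mathcal{T}^{\phi^t} Q^{\phi^t,*} - \mathcal{T}^{\phi^t} Q^{\omega,*}\|_\infty + \|\mathcal{T}^{\phi^t} Q^{\omega,*} - \mathcal{T}^{\phi^\omega} Q^{\omega,*}\|_\infty
\]
and applying contraction to the first term gives $(1-\gamma)\|Q^{\phi^t,*}-Q^{\omega,*}\|_\infty \le \|\mathcal{T}^{\phi^t} Q^{\omega,*} - \mathcal{T}^{\phi^\omega} Q^{\omega,*}\|_\infty$, which vanishes almost surely by step (ii) applied to the fixed bounded Q-function $Q^{\omega,*}$. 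The main obstacle is step (ii) for $\rho = \text{VaR}_\alpha$, since VaR is generally discontinuous under weak convergence of distributions; the saving grace is that the limit is a Dirac, whose quantile function is constant, so one can argue either via a Portmanteau-type quantile convergence or, more robustly, by using Dirichlet concentration on $\|p-p^c_{s,a}\|_1$ together with Assumption \ref{assump:risk_function}.\ref{assump:regularity} to transfer closeness of $p$ to closeness of $\rho$ via a sup-norm bound on $f(p|\cdot)-f(p^c_{s,a}|\cdot)$ restricted to a high-probability set.
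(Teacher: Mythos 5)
Your proposal is correct and follows essentially the same route as the paper's proof: the contraction-based perturbation bound $(1-\gamma)\|Q^{\phi^t,*}-Q^{\omega,*}\|_\infty \le \|\mathcal{T}^{\phi^t}Q^{\omega,*}-\mathcal{T}^{\phi^\omega}Q^{\omega,*}\|_\infty$, the case split on $\|O^\infty_{s,a}(\omega)\|$, posterior consistency (the paper cites Doob's theorem where you use the SLLN plus Dirichlet moments), and the high-probability-neighborhood argument that traps $\mathrm{VaR}_\alpha$ and $\mathrm{CVaR}_\alpha$ near $f(p^c_{s,a}|s,a,Q)$ via the Lipschitz bound on $f$ in $p$. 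The only cosmetic difference is that the paper establishes the operator convergence uniformly over $\{\|Q\|_\infty \le \bar R/(1-\gamma)\}$ (at no extra cost, since the bounds are $Q$-free), whereas you only invoke it at the single point $Q^{\omega,*}$, which suffices for this proposition.
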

Compared with Theorem \ref{thm: model difference} which characterizes  the difference between the BRMDP and the original MDP by computing a concentration bound,  Proposition \ref{prop:posterior convergence} ensures the strong convergence of the BRMDP model to the data-conditional optimal Q-function. The next Theorem \ref{thm:estimator bias} ensures that the bias of the  proposed estimator for the Bellman operator with varying sample sizes converges to zero uniformly in $Q$.

\begin{theorem}\label{thm:estimator bias}\textbf{(Diminishing bias for Bellman estimator)}
Denote by $(k)$ the $k$th iteration of the Q-learning update. Let $t_k$ be the stage such that $\sum_{t=1}^{t_k-1}m(t) < k \le \sum_{t=1}^{t_k}m(t) $. Let $N^{(k)}_{s,a}$ denote the sample size in iteration $k$ and $\omega_{t_k}$ denote all the past observations until stage $t_k$. Denote by $\widehat{\mathcal{T}}^\kk Q(s,a) = \widehat{\mathcal{T}}^{\phi_{t_k}}_{N^\kk_{s,a}} Q(s,a)$ as defined in \eqref{eq:MC estimator}.
Then, we have almost surely,
\begin{align*}
    \lim_{k\rightarrow\infty} \sup_{||Q||_\infty \le \frac{\Bar{R}}{1-\gamma}}  \left |\mathbb{E}[\widehat{\mathcal{T}}^\kk Q(s,a) - \mathcal{T}^{\phi^{t_k}}Q(s,a) | \omega_{t_k}]\right| = 0.
\end{align*}

\end{theorem}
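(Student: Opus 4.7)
The plan is to work one $(s,a)$ pair at a time (the sup over the finite product $\calS\times\calA$ then follows by a trivial union) and to control
$b_k(Q) := \mathbb{E}[\widehat{\mathcal{T}}\kk Q(s,a) - \mathcal{T}^{\phi^{t_k}}Q(s,a) \mid \omega_{t_k}]$
uniformly on the compact ball $\calB := \{Q : \|Q\|_\infty \le \bar R/(1-\gamma)\}$ by combining a pointwise a.s.\ limit with an equi-Lipschitz estimate. For the Lipschitz part, I observe that $|f(p\mid s,a,Q) - f(p\mid s,a,Q')| \le \gamma\|Q-Q'\|_\infty$ uniformly in $p$, so Assumption~\ref{assump:risk_function}.\ref{assump:regularity} yields the same bound for $\mathcal{T}^{\phi^{t_k}}$, while the Monte Carlo estimator is pathwise $\gamma$-Lipschitz because both the empirical $\alpha$-quantile and its tail average are nonexpansive in their arguments. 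Consequently $\{b_k\}$ is equi-Lipschitz on $\calB$, and pointwise convergence upgrades to uniform convergence by a standard finite covering argument.

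Next I would prove $b_k(Q)\to 0$ a.s.\ for each fixed $Q$ by splitting the probability space into two events per $(s,a)$. On $\{\|O^\infty_{s,a}\| < \infty\}$, the bookkeeping in Algorithm~\ref{alg:multi-stage} implies $\phi^t_{s,a}$ freezes at $\phi^\omega_{s,a}$ after some random stage, and the sample-size rule then increments $N^{(k)}_{s,a}$ by $n(t)$ at every subsequent stage, so $N^{(k)}_{s,a}\to\infty$. Since the Dirichlet $\phi^\omega_{s,a}$ has a continuous density and $f(p\mid s,a,Q)$ is affine in $p$, the induced distribution of $X_i = f(p_i\mid s,a,Q)$ is either continuous or a point mass; in either case classical strong consistency of the sample $\alpha$-quantile and the sample CVaR, together with boundedness of $X_i$ and dominated convergence, yield $b_k(Q)\to 0$.

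On the complementary event $\{\|O^\infty_{s,a}\| = \infty\}$, the empirical transition frequencies $O^{t_k}_{s,a,s'}/\|O^{t_k}_{s,a}\|$ converge a.s.\ to $p^c_{s,a}(s')$ and the Dirichlet concentrates, so $p\sim\phi^{t_k}_{s,a}\to p^c_{s,a}$ in probability. Standard continuity of VaR$_\alpha$ and CVaR$_\alpha$ under convergence in probability to a constant then gives $\mathcal{T}^{\phi^{t_k}}Q(s,a) \to f(p^c_{s,a}\mid s,a,Q)$. For the estimator I would use the pathwise sandwich
\[
\bigl|\widehat{\mathcal{T}}\kk Q(s,a) - f(p^c_{s,a}\mid s,a,Q)\bigr| \le \frac{\bar R}{1-\gamma}\max_{i\le N^{(k)}_{s,a}}\|p_i - p^c_{s,a}\|_1,
\]
since every order statistic and every tail average lies in $[\min_i X_i, \max_i X_i]$; this reduces the estimator side to bounding the maximum of $N^{(k)}_{s,a}$ i.i.d.\ Dirichlet draws. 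A sub-Gaussian tail bound for Dirichlet marginals combined with a union bound shows this maximum tends to zero in conditional expectation provided $\log N^{(k)}_{s,a} = o(\|O^{t_k}_{s,a}\|)$. Algorithm~\ref{alg:multi-stage}'s update rule forces $N^{(k)}_{s,a}$ to grow at most linearly in the total number of stream observations, while $\|O^{t_k}_{s,a}\|\to\infty$ on this event, so the requirement holds a.s.

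The main obstacle will be precisely this infinite-observations case. Unlike standard empirical VaR/CVaR analysis where the sampling distribution is fixed and only the sample size grows, here the posterior $\phi^{t_k}_{s,a}$ drifts at every stage while $N^{(k)}_{s,a}$ may itself grow unboundedly, so one needs a concentration inequality strong enough to absorb the union bound over $N^{(k)}_{s,a}$ samples and must use the algorithm's sample-size bookkeeping to ensure the sample-size growth cannot outpace the Dirichlet concentration rate.
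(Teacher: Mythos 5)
Your equi-Lipschitz reduction is a genuinely nice simplification: since $Q\mapsto f(p\mid s,a,Q)$ is $\gamma$-Lipschitz uniformly in $p$, and both the sorted-sample functionals and $\rho$ (via Assumption~\ref{assump:risk_function}.\ref{assump:regularity}) are nonexpansive, the bias functions $b_k$ are indeed equi-Lipschitz on the ball, and pointwise a.s.\ convergence on a countable dense set would upgrade to the stated uniform limit. This is cleaner than the paper, which instead makes every estimate $Q$-independent by hand and, in the finite-observation case, proves a uniform-in-$d$ Glivenko--Cantelli theorem over all linear functionals $d^\top p$. Your treatment of the event $\{\|O^\infty_{s,a}\|<\infty\}$ also matches the paper's in substance.

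The gap is in the case $\|O^\infty_{s,a}\|=\infty$, and it is exactly the obstacle you flag at the end. Your sandwich bounds the estimator's error by $\frac{\bar R}{1-\gamma}\max_{i\le N^{(k)}_{s,a}}\|p_i-p^c_{s,a}\|_1$, i.e.\ by the \emph{extreme} order statistics, and the resulting union bound needs $\log N^{(k)}_{s,a}=o(O^{t_k}_{s,a})$. The algorithm does not guarantee this: $N_{s,a}$ is incremented for every stream observation that does \emph{not} hit $(s,a)$, so if the behavior policy visits $(s,a)$ infinitely often but with vanishing frequency, $N^{(k)}_{s,a}$ grows like the total observation count while $O^{t_k}_{s,a}\to\infty$ arbitrarily slowly; then $\log N^{(k)}_{s,a}/O^{t_k}_{s,a}\to\infty$ and the conditional expectation of the maximum does not vanish --- the max of that many weakly concentrated Dirichlet draws spreads over the whole simplex. ``$N$ grows at most linearly in the data'' does not rescue this, because linear growth still gives $\log N\asymp\log(\text{total observations})$, which can dominate $O^{t_k}_{s,a}$. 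The fix, which is what the paper does, is to never touch the extreme order statistics: replace each $X_i$ by a two-valued dominating variable taking a ``good'' value with posterior probability at least $1-\varepsilon$, and observe that the $\lceil N\alpha\rceil$-th order statistic equals the ``bad'' value only if at least $\lceil N\alpha\rceil$ of $N$ Bernoulli$(\varepsilon)$ trials succeed with $\varepsilon<\alpha$ --- a binomial tail that the paper bounds via Stirling by a constant $C'(\varepsilon)\to 0$ \emph{uniformly over all sample sizes $N$}. For CVaR the bottom $\lceil N\varepsilon\rceil$ order statistics are split off with total weight $O(\varepsilon/\alpha)$ and bounded crudely. You need an argument of this type (a quantile-level large-deviation bound uniform in $N$) in place of the max-based reduction; as written, that step fails.
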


Theorem \ref{thm:estimator bias} provides a uniform convergence of the estimated Bellman estimator, which is crucial to prove Theorem \ref{thm:convergence}. Notably, with both streaming data and risk functional, proof of Theorem \ref{thm:estimator bias} is much more challenging than existing work on distributionally robust Q-learning, where randomness only comes from the learning  process, which refers to the random data generated by the simulator. In our setting, we have mixed randomness coming from both the observation process and learning process, which complicates the analysis and differs from that of pure offline Q-learning. Also, the analysis is different from online Bayesian Q-learning, where the sample complexity results are usually constructed assuming an optimal policy is re-solved and deployed instantly in each period. In contrast, we aim to prove the convergence of Q-learning  in order to obtain the optimal policy.

Together with Proposition \ref{prop:posterior convergence} and Theorem \ref{thm:estimator bias}, we establish the convergence of Algorithm \ref{alg:multi-stage} in Theorem \ref{thm:convergence}.
\begin{theorem} \label{thm:convergence}
Denote by $Q_t$ the Q-function given by Algorithm \ref{alg:multi-stage} at the end of stage $t$. Assume $T = \infty$, and the learning rate $\{\lambda_\ell\}_{\ell=1}^\infty$ satisfies 
$ \sum_{\ell=1} ^\infty \lambda_\ell = \infty, \ \sum_{\ell=1} ^\infty \lambda^2_\ell < \infty.$
Then, we have 
almost surely,
$$
    \lim_{t\rightarrow \infty} ||Q_t - Q^{\omega,*}||_\infty = 0.
$$
\end{theorem}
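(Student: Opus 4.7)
The plan is to view the inner Q-learning update as a stochastic approximation for the $\gamma$-contractive but stage-varying Bellman operator $\mathcal{T}^{\phi^{t_k}}$, and to reduce the analysis to a standard contractive stochastic-approximation lemma (of Jaakkola--Jordan--Singh or Tsitsiklis type) by absorbing the posterior drift and the Monte-Carlo bias into a vanishing error term. Re-indexing the updates by the global counter $k$, the $Q_t$ in the statement is the subsequence $Q_{k_t}$ with $k_t=\sum_{\tau=1}^{t}m(\tau)$, so since $m(t)\ge 1$ gives $k_t\to\infty$, it is enough to show $\|Q_k - Q^{\omega,*}\|_\infty \to 0$ a.s.

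A first preparatory step is to show by induction that $\|Q_k\|_\infty \le \bar R/(1-\gamma)$ almost surely: each sample $X_i = f(p_i|s,a,Q_k)$ is a convex combination of quantities $r(s,a,s')+\gamma\max_b Q_k(s',b)$, the VaR/CVaR estimator in \eqref{eq:MC estimator} is itself a convex combination of the $X_i$, and the Q-update is again a convex combination, so the bound propagates from $Q_0\equiv 0$. This uniform bound is what enables the application of Theorem \ref{thm:estimator bias}, whose supremum is taken over the ball $\{\|Q\|_\infty \le \bar R/(1-\gamma)\}$.

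Let $\mathcal{F}_k$ denote the $\sigma$-algebra generated by all real-world observations up to stage $t_k$ together with all Monte-Carlo samples used in iterations $1,\ldots,k-1$, so that $Q_k$ is $\mathcal{F}_k$-measurable and the fresh samples at iteration $k$ are, given $\omega_{t_k}$, independent of the past. Setting $e_k:=Q_k - Q^{\omega,*}$, I would decompose
\begin{align*}
\widehat{\mathcal{T}}^{(k)} Q_k(s,a) - Q^{\omega,*}(s,a)
&= \bigl[\mathcal{T}^{\phi^{t_k}} Q_k(s,a) - \mathcal{T}^{\phi^{t_k}} Q^{\omega,*}(s,a)\bigr] \\
&\quad + \bigl[\mathbb{E}[\widehat{\mathcal{T}}^{(k)} Q_k(s,a)\mid\mathcal{F}_k] - \mathcal{T}^{\phi^{t_k}} Q_k(s,a)\bigr] \\
&\quad + \bigl[\mathcal{T}^{\phi^{t_k}} Q^{\omega,*}(s,a) - Q^{\omega,*}(s,a)\bigr] \\
&\quad + \bigl[\widehat{\mathcal{T}}^{(k)} Q_k(s,a) - \mathbb{E}[\widehat{\mathcal{T}}^{(k)} Q_k(s,a)\mid\mathcal{F}_k]\bigr].
\end{align*}
The first bracket is bounded in sup-norm by $\gamma\|e_k\|_\infty$ via Theorem \ref{thm:BRMDP}.1; the second, the Monte-Carlo bias, converges to $0$ a.s.\ uniformly in $(s,a)$ by Theorem \ref{thm:estimator bias}; the third, the posterior-drift of the target, is bounded through the fixed-point identity $\mathcal{T}^{\phi^{t_k}} Q^{\phi^{t_k},*}=Q^{\phi^{t_k},*}$ and another use of contraction by $(1+\gamma)\|Q^{\phi^{t_k},*}-Q^{\omega,*}\|_\infty$, which vanishes a.s.\ by Proposition \ref{prop:posterior convergence}; and the fourth is a martingale difference with conditional variance bounded by a deterministic constant since $\widehat{\mathcal{T}}^{(k)} Q_k$ is itself bounded by $\bar R/(1-\gamma)$.

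The resulting error recursion $e_{k+1}=(1-\lambda_k)e_k+\lambda_k(H_k+c_k+w_k)$ with $|H_k(s,a)|\le\gamma\|e_k\|_\infty$, $\|c_k\|_\infty\to 0$ a.s., and bounded-variance martingale-difference $w_k$ is exactly the hypothesis of the classical synchronous contractive Q-learning convergence theorem with vanishing bias, which under the Robbins--Monro conditions $\sum_\ell \lambda_\ell=\infty$ and $\sum_\ell \lambda_\ell^2<\infty$ yields $\|e_k\|_\infty\to 0$ a.s. The chief obstacle I anticipate is the measurability bookkeeping needed to activate Theorem \ref{thm:estimator bias} on the random $\mathcal{F}_k$-measurable iterate $Q_k$: the theorem's uniform-in-$Q$ bias bound is stated conditionally on $\omega_{t_k}$ and over a \emph{deterministic} norm ball, so one must justify that passing to the random $Q_k$ in that ball preserves the a.s.\ vanishing bias. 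This is resolved by combining the induction bound on $Q_k$ with the conditional independence, given $\omega_{t_k}$, between the fresh Monte-Carlo samples and the historical randomness that defines $Q_k$.
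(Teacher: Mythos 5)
Your proposal is correct and follows essentially the same route as the paper: the same error recursion $e_{k+1}=(1-\lambda_k)e_k+\lambda_k(\cdot)$, the same three-way splitting of the conditional drift into a $\gamma$-contraction term, the a.s.\ vanishing Monte-Carlo bias from Theorem \ref{thm:estimator bias}, and the posterior-drift term $(1+\gamma)\|Q^{\phi^{t_k},*}-Q^{\omega,*}\|_\infty$ controlled by Proposition \ref{prop:posterior convergence}, all fed into the Jaakkola--Jordan--Singh-type stochastic-approximation lemma (the paper's Lemma \ref{lem: stochastic approximation}). Your explicit induction for $\|Q_k\|_\infty\le\bar R/(1-\gamma)$ and the measurability discussion for conditioning on $\omega_{t_k}$ are points the paper treats more tersely, but they do not change the argument.
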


\begin{corollary} \label{cor:convergence}
Let $Q^t$ be defined as in Theorem \ref{thm:convergence} and $O^\infty_{s,a}$ be defined as in Definition \ref{def:conditional optimal}. Assume $T = \infty$ and $||O^\infty_{s,a}||_\infty = \infty, \forall s \in \calS ,  a \in \calA$ almost surely. Then 
\begin{equation*}
    \lim_{t\rightarrow \infty} ||Q^t - Q^{c,*}||_\infty = 0 \quad \text{ almost surely },
\end{equation*}
where 
$$ Q^{c,*} (s,a) = \sum_{s' \in \calS} p^c_{s,a}(s') [r(s,a,s') + \max_{b\in\calA}Q^{c,*}(s',b)]  $$
 is the true optimal Q-function.
\end{corollary}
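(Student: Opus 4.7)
The plan is to deduce Corollary \ref{cor:convergence} from Theorem \ref{thm:convergence} by showing that, under the additional hypothesis $\|O^\infty_{s,a}\|_\infty = \infty$ for all $(s,a)$ almost surely, the data-conditional optimal Q-function $Q^{\omega,*}$ coincides with the true optimal Q-function $Q^{c,*}$ on a probability-one event. Once this identification is in place, the desired convergence is immediate from Theorem \ref{thm:convergence}.

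The first step is to pin down the limiting posterior. The hypothesis gives a single probability-one event $\Omega_0$ on which $\|O^\infty_{s,a}\|_\infty = \infty$ simultaneously for every $(s,a) \in \calS \times \calA$ (using finiteness of $\calS \times \calA$ to intersect the per-pair probability-one events). On $\Omega_0$, Definition \ref{def:conditional optimal} identifies $\phi^\omega_{s,a}$ with the Dirac measure $\delta_{p^c_{s,a}}$ at every state-action pair.

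The second step is to show that on $\Omega_0$ the BRMDP Bellman operator $\mathcal{T}^{\phi^\omega}$ collapses to the classical risk-neutral Bellman operator of $\mathcal{M}^c$. Fix any $Q$ and any $(s,a)$. When $p \sim \delta_{p^c_{s,a}}$, the quantity $f(p|s,a,Q)$ from \eqref{eq: f} equals the constant $f(p^c_{s,a}|s,a,Q)$ almost surely. Assumption \ref{assump:risk_function}.\ref{assmp:positive_lambda} with $\gamma = 0$ gives $\rho(0) = 0$, and then Assumption \ref{assump:risk_function}.\ref{assump:constant} with $f_1 \equiv 0$ and $C = f(p^c_{s,a}|s,a,Q)$ yields $\rho_{p \sim \delta_{p^c_{s,a}}}(f(p|s,a,Q)) = f(p^c_{s,a}|s,a,Q)$. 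Thus on $\Omega_0$,
\begin{equation*}
\mathcal{T}^{\phi^\omega} Q(s,a) = \sum_{s' \in \calS} p^c_{s,a}(s')\bigl[r(s,a,s') + \gamma \max_{b\in\calA} Q(s',b)\bigr],
\end{equation*}
which is the classical Bellman operator on $\mathcal{M}^c$.

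Since $Q^{\omega,*}$ is the unique fixed point of $\mathcal{T}^{\phi^\omega}$ (Theorem \ref{thm:BRMDP} applied to the BRMDP with posterior $\phi^\omega$) and $Q^{c,*}$ is the unique fixed point of the classical Bellman operator, equality of the two operators on $\Omega_0$ forces $Q^{\omega,*}(\omega) = Q^{c,*}$ for every $\omega \in \Omega_0$. Intersecting $\Omega_0$ with the probability-one event on which Theorem \ref{thm:convergence} gives $\|Q^t - Q^{\omega,*}\|_\infty \to 0$ and applying the triangle inequality yields $\|Q^t - Q^{c,*}\|_\infty \to 0$ almost surely. I do not anticipate any substantive obstacle here: the argument is essentially bookkeeping with the risk-functional axioms and fixed-point uniqueness, and the only subtlety is ensuring that the null set on which the Dirac identification fails is common across all $(s,a)$, which is settled by the finiteness of $\calS \times \calA$.
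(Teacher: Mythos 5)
Your proposal is correct and fills in exactly the argument the paper intends: the paper itself gives no detailed proof, remarking only that the corollary ``is straightforward from Theorem \ref{thm:convergence}'' once every state-action pair receives infinitely many observations, which is precisely the Dirac-posterior identification you carry out via Assumption \ref{assump:risk_function}.\ref{assmp:positive_lambda} and \ref{assump:risk_function}.\ref{assump:constant} together with fixed-point uniqueness. One incidental observation: the displayed equation for $Q^{c,*}$ in the corollary statement omits the discount factor $\gamma$, which appears to be a typo in the paper, and your version of the Bellman operator correctly includes it.
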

Corollary \ref{cor:convergence} is straightforward from Theorem \ref{thm:convergence}, which ensures Algorithm \ref{alg:multi-stage} will eventually learn the true optimal Q-function if we can obtain an infinite number of observations for each state-action pair. 

\section{Numerical Experiments}\label{sec:numerical}
We test the performance of the following algorithms:
\begin{itemize}
    \item BRQL-VaR: our proposed multi-stage Bayesian risk-averse Q-learning algorithm with risk functional VaR;
    \item BRQL-CVaR: our proposed multi-stage Bayesian risk-averse Q-learning algorithm with risk functional CvaR;
    \item BRQL-mean: the risk-neutral Bayesian Q-learning  function. That is, the risk functional is the expectation taken with respect to the posterior distribution;
    \item DRQL-KL: distributionally robust Q-learning algorithm with KL divergence (see \cite{liu2022distributionally});
    \item DRQL-Wass: distributionally robust Q-learning algorithm with Wasserstein distance (see \cite{neufeld2022robust}).
\end{itemize}

\subsection{Testing examples}
\textbf{Example 1: Coin Toss.} 
Consider we are playing the following game. Each time we will toss K coins and observe the number of coins that show heads, where the chance of each coin showing heads is unknown. After observing the number of heads in the last toss, we can make a guess about whether the next toss will have more heads or fewer heads. If our guess is right, we can get 1 dollar as the reward, otherwise, we need to pay 1 dollar. We also have the choice of not guessing, in which case we do not pay or get paid. We model this game as a discounted infinite-horizon MDP. The state $s_t \in \calS =\{0,1,\ldots,K \} $ denotes the number of heads in $t$th toss.  The actions space is $\calA = \{-1,0,1 \}$, where $a=1$ corresponds to guess $s_{t+1} > s_t$, $a=-1$ corresponds to guess $s_{t+1} < s_t$, and $a=0$ corresponds to not guess. Hence, the reward function is $$ r(s,a,s') :=a \mathbf{1}_{\left\{s<s^{\prime}\right\}}-a \mathbf{1}_{\left\{s>s^{\prime}\right\}}-|a| \mathbf{1}_{\left\{s=s^{\prime}\right\}}.$$ Assume each coin shows a head with probability at least $0.4$. The number of coins $K=10$.

\textbf{Example 2: Inventory Management.} Suppose a warehouse manager runs a capacitied system. At the beginning of period $t$, the manager observes the current inventory level $s_t$ and orders additional goods of the amount $a_t$. An ordering cost of $c=1$ is incurred for each unit of goods. The demand follows a truncated Poisson distribution with a mean of $3$ and support $\{0,1,\ldots,K\}$. Suppose the demand arrives during each period and is fulfilled at the end of the period. For each unit of fulfilled demand, we can obtain a profit of $u = 5$. If a unit of demand is not fulfilled at the end of the period, the demand is lost and a penalty cost of $q=2$ is incurred. If there are any remaining goods at the end of the period, the goods will be taken to the next period with a holding cost $h=1$ per unit. The warehouse has a maximal capacity of $K$ for the goods. We model this problem as a discounted infinite-horizon MDP with state space $\calS = \{-K,\ldots,0,\ldots,K\}$, where $(s_t)^+ = \max(s_t,0)$ represents the inventory level at the beginning of period $t$ and $(s_t)^- = -\min(s_t,0) $ represents the lost demand at the end of period $t-1$. The action space is $\calA_s := \{0,\ldots, K-s^+\}$. The reward is 
$$ r(s,a,s') = -(c\cdot a  + h\cdot(s')^+ +  q\cdot (s')^-) + u\cdot(s^++a -(s')^+ ).$$
We consider two settings: (I) the demand in each period uniformly distributes among $\{0,1,\ldots,K\}$ and (II) the demand depends on the current inventory level $s$.  For the second setting, we will consider the case where observations are insufficient to estimate the transition probability for every state-action pair. The experiment details can be found in Appendix \ref{sec: experiment details}.

\subsection{Results}
 In Figure \ref{fig:coin toss 0.2}- \ref{fig:inventory streaming}, we compare the value functions of different algorithms as the time stage increases. The value function of each algorithm is calculated by deploying the optimal policy in the real environment. Each curve shows the empirical expected performance and the strip around the curve shows the $95\%$ confidence interval. In both examples, our proposed algorithms outperform the two distributionally robust Q-learning algorithms in both expected performance and variation as the time stage increases, since our proposed algorithms dynamically update the posterior to reduce the model uncertainty while two DRQL algorithms learns with fixed ambiguity set. Compared with the risk-neutral algorithm, BRQL-VaR and BRQL-CVaR achieve lower expected value functions but have smaller variations, which shows the robustness of our two proposed algorithms. 

Moreover, in Figure \ref{fig:inventory fixed} We test for the insufficient data setting, where we only have a set of historical data to estimate the transition model at the initial time stage. The value function is calculated as deploying the learned policy in different environments with demand following different Poisson distributions. Figure \ref{fig:inventory fixed} indicates the two proposed algorithms achieve higher value functions than the risk-neutral algorithm in the more adversarial setting (with Poisson parameter less than $3$), showing their robustness. They also obtain lower  value functions than two DRQL algorithms in the adversarial setting and higher value functions in other settings, indicating the risk functional is more flexible compared to worst-case criterion.

   %

\begin{figure}[ht]
\vskip -0.2in
\begin{minipage}{0.45\linewidth}
 \centering
\includegraphics[width=0.9\linewidth]{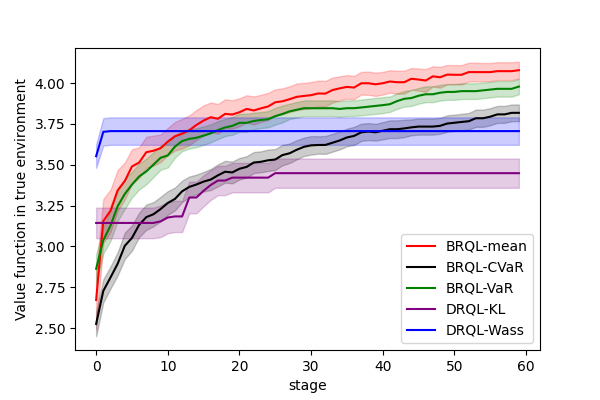}

\caption{Coin Toss: risk level $\alpha = 0.2$}
    \label{fig:coin toss 0.2}
\end{minipage}
 \quad
\begin{minipage}{0.45\linewidth}
 \centering
\includegraphics[width=0.9\linewidth]{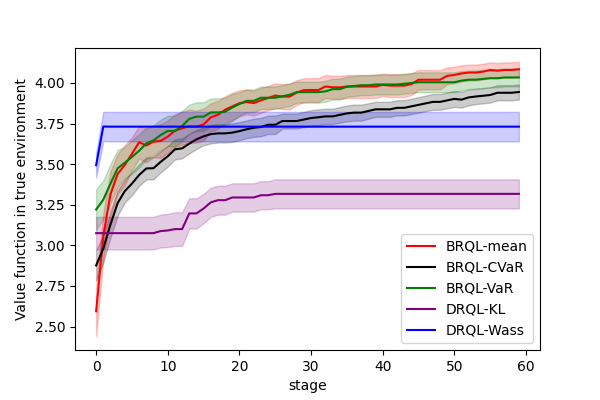}
\caption{Coin Toss: risk level $\alpha = 0.4$}
    \label{fig:coin toss 0.4}
\end{minipage}
\vskip -0.1in
\end{figure}

\begin{figure}[ht]
\begin{minipage}{0.45\linewidth}
 \centering
\includegraphics[width=0.9\linewidth]{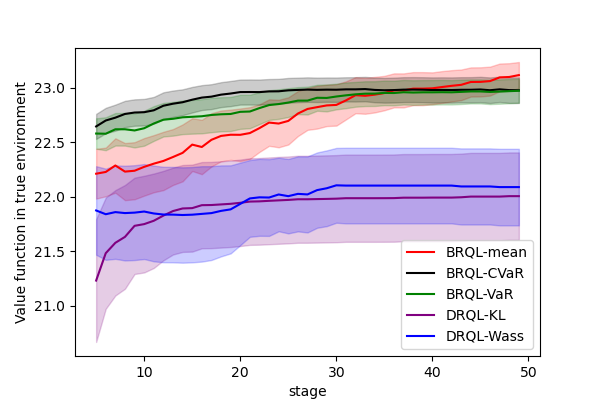}
\caption{Inventory Management: streaming observations}
    \label{fig:inventory streaming}
\end{minipage} \quad
\begin{minipage}{0.45\linewidth}
 \centering
\includegraphics[width=0.9\linewidth]{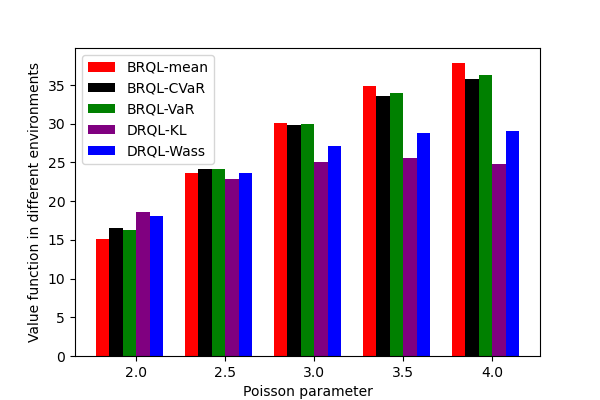}
\caption{Inventory Management: fixed BRMDP}
    \label{fig:inventory fixed}
\end{minipage}
\end{figure}

\section{Conclusion and Limitation}
In this paper, we propose a novel multi-stage Bayesian risk-averse Q-learning algorithm to learn the optimal policy with streaming data, by reformulating the infinite-horizon MDP with unknown transition model as an  infinite-horizon BRMDP. In particular, we consider the two cases of risk functionals, VaR and CVaR, for which we design a Monte Carlo estimator with varying sample sizes to approximate the Bellman operator of BRMDP. We demonstrate the correctness of the BRMDP formulation by providing statistical guarantee and prove the  strong asymptotic convergence of the proposed Q-Learning algorithm. The numerical results demonstrate that the proposed algorithms are efficient with streaming data and robust with limited data.

As discussed in the paper, one limitation of the current framework is that the behavior policy which generates real-world observations is assumed to be given. This is suitable for situations when it is expensive for the agent to interact with the real environment or to change the policy frequently as it may cause a large cost or system instability. 
An interesting future direction is to consider an online learning setting, where at each
period the agent also needs to take action in the real world.

\bibliographystyle{plain}
\bibliography{main}

\appendix
\title{Supplement Material}
\maketitle
\section{Dirichlet Posterior on State-action Pair}\label{sec: Dirichlet}
A Dirichlet distribution is parameterized by a count vector $\phi=\left(\phi_1, \ldots, \phi_k\right)$, where $\phi_i \geq 1$, such that the density of probability distribution $p=\left(p_1, \ldots, p_k\right)$ is defined as $f(p \mid \phi) \propto \prod_{i=1}^k p_i^{\phi_i-1}$. For each $(s,a) \in \calS \times \calA$, we impose a Dirichlet prior with parameter $\phi_{s,a} = (\phi_{s,a}(s'))_{s'\in\calS}$ on the unknown transition probability $p_{s,a} = (p_{s,a}(s'))_{s'\in\calS}$. After we observe the transition $(s,a,s')$ for $o_{s,a,s'}$ times $\forall s'\in\calS$, the posterior distribution of $p_{s,a}$ is also a Dirichlet distribution with parameter $\phi_{s,a} + \mathbf{o}_{s,a} = (\phi_{s,a}(s')+o_{s,a,s'})_{s'\in\calS}$, where $\mathbf{o}_{s,a} = (o_{s,a,s'})_{s'\in \calS}$.

\section{Technical Proofs} \label{sec: proof}
\begin{table}[ht]
\caption{Checklist of notations for different Q-functions.}
\label{tab:Q}
\vskip 0.15in
\begin{center}
\begin{small}
\begin{tabular}{l|l}
\toprule
Notation & Explanation\\
\midrule
$Q^{\phi,*}$ & Optimal Q-function of BRMDP with posterior $\phi$  \\
$Q^{\omega,*}$  & Data-conditional optimal Q-function conditioned on observation process $\omega$\\
$Q_t$ & Q-function at stage $t$ given by  Algorithm \ref{alg:multi-stage}\\
$Q^{c,*}$ & Optimal Q-function of the true environment\\
\bottomrule
\end{tabular}
\end{small}
\end{center}
\vskip -0.1in
\end{table}

We first introduce the following lemma  that guarantees the maximum taken over randomized policies  is equivalent to being taken over only deterministic policies.
\begin{lemma} (Deterministic Bellman operator) \label{lem:operator_deterministic}

\begin{align*} \mathcal{L}^\phi  V(s) =&  \max_{a \in \calA} \rho_{p \sim \phi_{s,a}} ( \mathbb{E}_{s \sim p} [r(s,a,s') + \gamma V(s')]) \\
=&  \sup_{\pi \in \Pi} \mathbb{E}_{a \sim \pi(s)} [\rho_{p\sim\phi_{s,a}} ( \mathbb{E}_{s' \sim p} [r(s,a,s') + \gamma V(s')])].
\end{align*}
\end{lemma}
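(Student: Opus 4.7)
The plan is to reduce this to the standard fact that the supremum of a function over a simplex of probability vectors (on a finite set) is attained at a vertex, i.e., a deterministic policy.

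First I would introduce the shorthand
\[
   g(a) \;:=\; \rho_{p\sim\phi_{s,a}}\bigl(\mathbb{E}_{s'\sim p}[r(s,a,s') + \gamma V(s')]\bigr),
\]
so that the claim becomes
\[
   \max_{a\in\calA} g(a) \;=\; \sup_{\pi\in\Pi}\;\mathbb{E}_{a\sim\pi(s)}[g(a)]
   \;=\; \sup_{\pi\in\Pi}\sum_{a\in\calA}\pi(a\mid s)\,g(a).
\]
Since $\calA$ is finite, $g$ is a bounded real-valued function on $\calA$ and the maximum is attained at some $a^\star \in \arg\max_{a\in\calA} g(a)$.

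Next I would prove the two inequalities. For the upper bound, any randomized policy $\pi\in\Pi$ puts a probability distribution $\pi(\cdot\mid s)$ on $\calA$, so
\[
   \mathbb{E}_{a\sim\pi(s)}[g(a)] \;=\; \sum_{a\in\calA}\pi(a\mid s)\,g(a) \;\le\; \sum_{a\in\calA}\pi(a\mid s)\,\max_{b\in\calA}g(b) \;=\; \max_{b\in\calA}g(b),
\]
giving $\sup_{\pi\in\Pi}\mathbb{E}_{a\sim\pi(s)}[g(a)] \le \max_{a\in\calA}g(a)$. For the reverse, consider the deterministic policy $\pi^\star$ defined by $\pi^\star(s)=a^\star$; since $\Pi$ contains all deterministic policies, $\pi^\star\in\Pi$, and
\[
   \mathbb{E}_{a\sim\pi^\star(s)}[g(a)] \;=\; g(a^\star) \;=\; \max_{a\in\calA}g(a),
\]
showing the supremum is attained and equals $\max_{a\in\calA}g(a)$. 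Combining gives the second equality, and the first equality is just the definition of $\mathcal{L}^\phi V(s)$.

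Since this is essentially a one-line observation about convex combinations over a finite simplex, I do not anticipate a genuine obstacle; the only thing to check carefully is that $g$ is well-defined (which follows from Assumption~\ref{assump:risk_function} and the boundedness of $r$ and $V$) and that the argmax exists (immediate from finiteness of $\calA$). The lemma will then be used in the proof of Theorem~\ref{thm:BRMDP} to justify restricting attention to deterministic policies when computing the fixed point of $\mathcal{L}^\phi$.
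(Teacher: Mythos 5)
Your argument is correct and matches the paper's own proof: both establish $\sup_{\pi\in\Pi}\mathbb{E}_{a\sim\pi(s)}[g(a)]\le\max_{a}g(a)$ via the convex-combination bound over the finite action set, and get the reverse inequality by noting that the maximizing deterministic policy belongs to $\Pi$. Your added remarks on the existence of the argmax and well-definedness of $g$ are fine but not needed beyond what the paper already assumes.
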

\begin{proof}
For an arbitrary randomized policy $\pi$, an arbitrary $V$, denote by $\pi(s,a) = \mathbb{P}(\pi(s) =a)$. We have 
\begin{align*}
    \sup_{a\in \calA} \rho_{p\sim\phi_{s,a}}(\mathbb{E}_{s' \sim p} [r(s,a,s') + \gamma V(s')])   \ge \sum_{a \in \calA} \pi(s,a)\rho_{p\sim\phi_{s,a}}(\mathbb{E}_{s' \sim p} [r(s,a,s') + \gamma V(s')]).
\end{align*}

Hence, 
\begin{align*} \mathcal{L}^\phi V(s) =  \max_{a \in \calA} \rho_{p \sim \phi_{s,a}} ( \mathbb{E}_{s \sim p} [r(s,a,s') + \gamma V(s')]) 
\ge \sup_{\pi \in \Pi} \mathbb{E}_{a \sim \pi(s)} [\rho_{p\sim\phi_{s,a}} ( \mathbb{E}_{s' \sim p} [r(s,a,s') + \gamma V(s')])].
\end{align*}
The other direction holds naturally since a deterministic policy is also a randomized policy.
\end{proof}

\subsection*{Proof of Theorem \ref{thm:BRMDP}}
\begin{proof}
\textbf{proof of 1.} $\forall \varepsilon>0$, there exists a deterministic policy $\pi$, such that $\forall s \in \calS$ 
$$ \mathcal{L}^\phi V(s) \le \rho_{p \sim \phi_{s,\pi(s)}} (\mathbb{E}_{s' \sim p} [r(s,\pi(s),s') + \gamma V(s')]) +\varepsilon.$$
Then,
\begin{align*}
    &\mathcal{L}^\phi V(s) - \mathcal{L}^\phi U(s) \\
    \le& \rho_{ p \sim \phi_{s,\pi(s)}} (\mathbb{E}_{s' \sim p} [r(s,\pi(s),s') + \gamma V(s')]) - \rho_{p\sim \phi_{s,\pi(s)}} (\mathbb{E}_{s' \sim p} [r(s,\pi(s),s') + \gamma U(s')]) +\varepsilon\\
    \le& \gamma|| \mathbb{E}_{s' \sim p} [ V(s') - U(s')]   ||_{p,\infty} + \varepsilon\\
    \le& \gamma ||V-U||_\infty + \varepsilon
\end{align*}
Since $\varepsilon$ can be chosen arbitrarily, $|| \mathcal{L}^\phi V - \mathcal{L}^\phi U || \le\gamma ||V-U||_\infty $. Switching the position of $V$ and $U$, we obtain the desired result.\\
\textbf{proof of 2.}
    For the first part, since $\mathcal{L}^\phi$ is a contraction mapping, there exists a unique $V^{\phi,*}$ such that $\mathcal{L}^\phi V^{\phi,*} =V^{\phi,*}$.   For the second part, let $\pi'$ be an arbitrary randomized policy. We have
    \begin{align*}
        V^{\phi,*}(s) & = \mathcal{L}^\phi V^{\phi,*}(s)\\
       & = \sup_{a\in \calA} \{ \rho_{p_1 \sim \phi_{s,a}}(\mathbb{E}_{s_1 \sim p_1}[r(s,a,s_1) + \gamma V^{\phi,*} (s_1)]) \} \\
       &   =\sup_{\pi\in\Pi} \sum_{a\in\calA} \pi(s,a) \rho_{p_1 \sim \phi_{s,a}}(\mathbb{E}_{s_1 \sim p_1}[r(s,a,s_1) + \gamma V^{\phi,*}(s_1)])\\
        & \ge\sum_{a\in\calA} \pi'(s,a) \rho_{p_1 \sim \phi_{s,a}}(\mathbb{E}_{s_1 \sim p_1}[r(s,a,s_1) + \gamma V^{\phi,*}(s_1)])\\
    \end{align*}
    where the third equality holds because of Lemma \ref{lem:operator_deterministic}. 
    Furthermore, by Assumption \ref{assump:risk_function}.2, we get 
    \begin{equation}\label{eq: induction}
    V^{\phi,*} \ge\sum_{a\in\calA} \pi'(s,a) \rho_{p_1 \sim \phi_{s,a}}(\mathbb{E}_{s_1 \sim p_1}[r(s,a,s_1)+ \gamma \sum_{a_1\in\calA} \pi'(s_1,a_1)\rho_{p_2 \sim \phi_{s_1,a_1}}(\mathbb{E}_{s_2 \sim p_2}[r(s_1,a_1,s_2) + \gamma V^{\phi,*}(s_2)])]).\end{equation}
    Define $\mathcal{L}^\phi_{\pi'}$ such that $\forall s \in \calS$ and $V \in\mathbb{R}^{|\calS|}$,  
    $$\mathcal{L}^\phi_{\pi'} V(s) = \sum_{a\in\calA} \pi'(s,a) \rho_{p_1 \sim \phi_{s,a}}(\mathbb{E}_{s_1 \sim p_1}[r(s,a,s_1) + \gamma V(s_1)]).$$
    By the same induction as \eqref{eq: induction}, we can derive 
    $$ V^{\phi,*} \ge (\mathcal{L}^\phi_{\pi'})^k V^{\phi,*}, \ \ \forall k \ge 1.$$
    Define $$V^{0,\pi'}(s) = \sum_{a\in\calA} \pi'(s,a)\rho_{ p\sim \phi_{s,a}}(\mathbb{E}_{s_1 \sim p}[r(s,a,s_1)])$$ and $$V^{k+1,\pi'}(s) = \sum_{a\in\calA} \pi'(s,a)\rho_{p \sim \phi_{s,a}} (\mathbb{E}_{s_1 \sim p}[r(s,a,s_1)+\gamma V^{k,\pi'}(s_1)]).$$
    By Assumption \ref{assump:risk_function}.\ref{assmp:positive_lambda}-\ref{assump:risk_function}.\ref{assump:regularity},
    we have 
    \begin{equation} \label{eq:V_star}
        V^{\phi,*}(s) \ge (\mathcal{L}^\phi_{\pi'})^k V^{\phi,*}(s)
               \ge  V^{k,\pi'}(s) - \gamma^k ||V^{\phi,*}||_\infty
    \end{equation}
  $\text{ As } k\rightarrow \infty,$
$$ | V^{\phi,\pi'}(s) - V^{k,\pi'}(s)| \le \gamma^{k+1} ||V^{\phi,\pi'}||_\infty \le \frac{\gamma^{k+1}}{1-\gamma}\Bar{R} \rightarrow 0,$$
where $V^{\phi,\pi'} $ is the value function of BRMDP with posterior $\phi$ and policy $\pi'$. Then  from \eqref{eq:V_star} we obtain
$$V^{\phi,*}(s) \ge V^{\phi,\pi'}(s) - \gamma^k ||V^{\phi,*}||_\infty -\frac{\gamma^{k+1}}{1-\gamma}\Bar{R}  ,\quad  \forall s \in \calS. $$ 
By taking $k \rightarrow \infty$, we have $V^{\phi,*}(s) \ge V^{\phi,\pi'}(s)$. Since $\pi'$ is chosen arbitrarily, we get the desired result.
\end{proof}
\subsection*{Proof of Theorem \ref{thm: model difference}}
\begin{proof}
    
    Denote by 
\begin{equation*}
  \begin{aligned}
    f(p|s,a,V) =& \mathbb{E}_{s'\sim p} [r(s,a,s')+\gamma V(s')]= \sum_{s' \in \calS} p(s')[r(s,a,s')+\gamma V(s')].
\end{aligned}  
\end{equation*}
We first bound $|f(p|s,a,V)-f(p'|s,a,V)|$ in terms of $\|p-p'\|_\infty$ for $\|V\|_\infty \le \frac{\bar{R}}{1-\gamma} $. To see this, 
\begin{equation} \label{eq: model difference in p norm}
    \begin{aligned}
    |f(p|s,a,V) - f(p'|s,a,V)| = &\left| \sum_{x \in \calS} [p(x) - p'(x)][r(s,a,x)+\gamma V(x)] \right|\\
    \le &  |\calS|[\bar{R}+\|V\|_\infty]\|p-p'\|_\infty \\
    \le &  \frac{|\calS|\bar{R}}{1-\gamma} \|p-p'\|_\infty.
\end{aligned}
\end{equation}

Now consider $p\sim \phi_{s,a}$. Since $\phi_{s,a}^0 = \mathbf{1}$, we have the posterior parameter $\phi_{s,a}(s') = o_{s,a,s'}+1$, where $o(s,a,s')$ is the number of transition from $s$ to $s'$ taken action $a$ in the data set. Let $\widehat{\mathcal{M}}$ denote the empirical MDP generated with the same data which are used to estimate $\phi$, with the empirical transition probability $\widehat{p}_{s,a}(s') = \frac{o_{s,a,s'}}{O_{s,a}}$. Let $p^c_{s,a}$ be the true transition distribution. Since $\{o_{s,a,s'}\}_{s' \in\calS}$ follows the multinomial distribution with parameter $O_{s,a};p^c_{s,a}$, we have 
$$ \mathbb{P}(|\widehat{p}_{s,a}(s')   - p^c_{s,a}(s')| \ge \sqrt{\frac{|\calS|}{\alpha}  }{O}_{s,a}^{-\frac{1}{3}}) \le \frac{\alpha}{|\calS|} {O}_{s,a}^{-\frac{1}{3}} $$
by Chebyshev inequality. Define $$\bar{p}_{s,a}(s') = \mathbb{E}_{p\sim \phi_{s,a}} [p(s')] = \frac{\phi_{s,a}(s')}{\sum_{s''\in\calS}\phi_{s,a}(s'')} = \frac{o_{s,a,s'}+1}{O_{s,a}+|\calS|},$$ and 
$$ \sigma^2_{s,a}(s') = \mathbb{E}_{p\sim\phi_{s,a}}[(p(s')-\bar{p}_{s,a}(s'))^2] = \frac{\phi_{s,a}(s')(\|\phi_{s,a}\|_1 - \phi_{s,a}(s')}{\|\phi_{s,a}\|_1^2(\|\phi_{s,a}\|_1+1)}\le \frac{1}{\|\phi_{s,a}\|_1} =\frac{1}{O_{s,a}+|\calS|}.$$
Notice 
\begin{align*}
 |p(s') - \hat{p}_{s,a}(s')| \le& |p (s')- \bar{p}_{s,a}(s') | + |  \bar{p}_{s,a}(s') - \hat{p}_{s,a}(s')| \\
 = &|p (s')- \bar{p}_{s,a}(s') | + \left|\frac{o_{s,a,s'}+1}{O_{s,a}+|\calS|} - \frac{o_{s,a,s'}}{O_{s,a}} \right| \\
 \le& |p (s')- \bar{p}_{s,a}(s') | + \frac{1}{O_{s,a}}.
\end{align*}
Since $|\calS| \ge 1, \alpha \le 1, O_{s,a}\ge 1$, $\sqrt{\frac{|\calS|}{\alpha}} O_{s,a}^{-\frac{1}{3}} - \frac{1}{O_{s,a}} \ge 0$.
Then, with probability at least $1-\frac{\alpha}{|\calS|} {O}_{s,a}^{-\frac{1}{3}}$, 
\begin{align}
    &\mathbb{P}_{p\sim \phi_{s,a}} \left(| p (s')- p^c_{s,a}(s')| \ge  3\sqrt{\frac{|\calS|}{\alpha}} O_{s,a}^{-\frac{1}{3}}\right) \notag \\
    \le &\mathbb{P}_{p\sim \phi_{s,a}} \left(| p (s')- \bar{p}_{s,a}(s')| \ge  3\sqrt{\frac{|\calS|}{\alpha}} O_{s,a}^{-\frac{1}{3}} - |\bar{p}_{s,a}(s')-\hat{p}_{s,a}(s')| - |\hat{p}_{s,a}(s') - p^c_{s,a}(s')| \right) \notag\\
    \le &\mathbb{P}_{p\sim \phi_{s,a}} \left(| p(s') - \bar{p}_{s,a}(s')| \ge  3\sqrt{\frac{|\calS|}{\alpha}} O_{s,a}^{-\frac{1}{3}} - \frac{1}{O_{s,a}} - \sqrt{\frac{|\calS|}{\alpha}} O_{s,a}^{-\frac{1}{3}}\right)\notag\\
    \le &\mathbb{P}_{p\sim \phi_{s,a}} \left(| p(s') - \bar{p}_{s,a}(s')| \ge  \sqrt{\frac{|\calS|}{\alpha}} O_{s,a}^{-\frac{1}{3}} \right)\notag\\
    \le &\mathbb{P}_{p\sim \phi_{s,a}} \left(| p(s') - \bar{p}_{s,a}(s')| \ge  \sqrt{\frac{|\calS|}{\alpha}} O_{s,a}^{-\frac{1}{3}}\right) \notag\\
    \le & \frac{\alpha O_{s,a}^{\frac{2}{3}}}{|\calS|} \sigma^2_{s,a}(s')\label{eq:chebyshev on posterior}\\
    \le &\frac{\alpha} {|\calS|} \notag,
\end{align}
where the second last inequality holds by Chebyshev inequality. Replacing $a$ with $\pi(s)$, we obtain with probability at least  $1-\frac{\alpha}{|\calS|} {O}_{s,\pi(s)}^{-\frac{1}{3}} \ge  1-\frac{\alpha}{|\calS|} \bar{O}^{-\frac{1}{3}}$,
\begin{align*}
    &\mathbb{P}_{p\sim \phi_{s,\pi(s)}} \left(| p(s') - {p}^c_{s,\pi(s)}(s')| \ge  3\sqrt{\frac{|\calS|}{\alpha}} O_{s,\pi(s)}^{-\frac{1}{3}}\right) \le \frac{\alpha}{|\calS|}.
\end{align*} 
This further implies  with probability at least $1-\frac{\alpha}{|\calS|} \bar{O}^{-\frac{1}{3}}$,
\begin{align*}
    &\mathbb{P}_{p\sim \phi_{s,\pi(s)}} \left(\| p - {p}^c_{s,\pi(s)}\|_\infty \ge  3\sqrt{\frac{|\calS|}{\alpha}} O_{s,\pi(s)}^{-\frac{1}{3}}\right) \le \sum_{s' \in \calS} \mathbb{P}_{p\sim \phi_{s,\pi(s)}} \left(| p(s') - {p}^c_{s,\pi(s)}(s')| \ge 3\sqrt{\frac{|\calS|}{\alpha}} O_{s,\pi(s)}^{-\frac{1}{3}}\right) \le \alpha.
\end{align*}
Let $U = \{p|  \| p - {p}^c_{s,\pi(s)}\|_\infty \le  3\sqrt{\frac{|\calS|}{\alpha}}O^{-\frac{1}{3}}_{s,\pi(s)}\}$. Since with probability at least $1 - \frac{\alpha}{|\calS|} \bar{O}^{-\frac{1}{3}}$, VaR$_\alpha$ is the $\alpha$-quantile, and $\mathbb{P}_{\phi_{s,a}}(U) \ge 1 - \alpha$, we have 
\begin{equation*}
    \inf_{p\in U} f(p|s,\pi(s),V) \le (\text{VaR}_\alpha)_{p\sim\phi_{s,\pi(s)}}(f(p|s,\pi(s),V)) \le \sup_{p \in U}f(p|s,\pi(s),V) .
\end{equation*}
Hence with probability at least $1 - \frac{\alpha}{|\calS|} \bar{O}^{-\frac{1}{3}}$,
\begin{equation}\label{eq: model difference in p norm2}
\begin{aligned} 
    &|(\text{VaR}_\alpha)_{p\sim\phi_{s,\pi(s)}}(f(p|s,\pi(s),V)) - f({p}^c_{s,\pi(s)}|s,\pi(s),V)|  \\
    \le& \sup_{p\in U} \|f(p|s,\pi(s),V) - f(\bar{p}_{s,\pi(s)}|s,\pi(s),V)\| \\
    \le& \frac{|\calS|\bar{R}}{1-\gamma} \sup_{p\in U} \|p-\bar{p}_{s,\pi(s)}\|_\infty \\ 
    \le& 3\sqrt{\frac{|\calS|}{\alpha}} O_{s,\pi(s)}^{-\frac{1}{3}}\frac{|\calS|\bar{R}}{1-\gamma}\\ 
    \le& 3\sqrt{\frac{|\calS|}{\alpha}} \bar{O}^{-\frac{1}{3}}\frac{|\calS|\bar{R}}{1-\gamma},
\end{aligned}
\end{equation}
where the second last equality is by \eqref{eq: model difference in p norm}. Since this holds for all $s$, then we obtain with probability $ 1 - \alpha\bar{O}^{-\frac{1}{3}}$, 
$$ |(\text{VaR}_\alpha)_{p\sim\phi_{s,\pi(s)}}(f(p|s,\pi(s),V)) - f(\hat{p}_{s,\pi(s)}|s,\pi(s),V)| \le 3\sqrt{\frac{|\calS|}{\alpha}} \bar{O}^{-\frac{1}{3}}\frac{|\calS|\bar{R}}{1-\gamma}$$
holds for all $s \in \calS$. 

Finally, let $\mathcal{L}^{\phi,\pi}$ denote the Bellman operator for BRMDP with posterior $\phi$ and policy $\pi$, that is  $ V^{\phi,\pi} = \mathcal{L}^{\phi,\pi}V^{\phi,\pi} = (\text{VaR}_\alpha)_{p\sim\phi_{s,\pi(s)}}(f(p|s,\pi(s),V^{\phi,\pi}))$. Similar as Theorem \ref{thm:BRMDP}, $\mathcal{L}^{\phi,\pi}$ is a $\gamma$-contraction mapping. We have 
\begin{align*}
     |V^{\phi,\pi}(s) - V^{c,\pi}(s) |= & | \mathcal{L}^{\phi,\pi} V^{\phi,\pi}(s) - \mathcal{L}^{\phi,\pi} V^{c,\pi}(s) + \mathcal{L}^{\phi,\pi} V^{c,\pi}(s) - V^{c,\pi}(s)| \\
    \le & \| \mathcal{L}^{\phi,\pi} V^{\phi,\pi} - \mathcal{L}^{\phi,\pi} V^{c,\pi} \|_\infty + | \mathcal{L}^{\phi,\pi} V^{c,\pi}(s) - V^{c,\pi}(s)|\\
    \le &\gamma \| V^{\phi,\pi} - V^{c,\pi} \|_\infty  + | \mathcal{L}^{\phi,\pi} V^{c,\pi}(s) - V^{c,\pi}(s)|.
\end{align*}
Since $\|V^{c,\pi}\|_\infty \le \frac{\bar{R}}{1-\lambda}$, we have the second term $ | \mathcal{L}^{\phi,\pi} V^{c,\pi} - V^{c,\pi}| \le 3\sqrt{\frac{|\calS|}{\alpha}}\bar{O}^{-\frac{1}{3}}\frac{|\calS|\bar{R}}{1-\gamma}$ by \eqref{eq: model difference in p norm2}. Maximize over $s$ on both sides, we obtain 
$$   \| V^{\phi,\pi} - V^{c,\pi} \|_\infty \le  \lambda \| V^{\phi,\pi} - V^{c,\pi} \|_\infty + \bar{O}^{-\frac{1}{3}} \sqrt{\frac{|\calS|}{\alpha}} \frac{3|\calS|\bar{R}}{(1-\gamma)}. $$
Hence,
$$   \| V^{\phi,\pi} - V^{c,\pi} \|_\infty \le  \bar{O}^{-\frac{1}{3}} \sqrt{\frac{|\calS|}{\alpha}} \frac{3|\calS|\bar{R}}{(1-\gamma)^2}  \le  \bar{O}^{-\frac{1}{3}} \sqrt{\frac{|\calS|}{\alpha}} \frac{5|\calS|\bar{R}}{(1-\gamma)^2}. $$

For $\rho$ is CVaR$_\alpha$,  notice by \eqref{eq:chebyshev on posterior}, we have with probability at least $1-\frac{\alpha}{|\calS|} O_{s,a}^{-\frac{1}{3}}$,
\begin{align*}
    \mathbb{P}_{p\sim \phi_{s,a}} \left(| p (s')- p^c_{s,a}(s')| \ge  3\sqrt{\frac{|\calS|}{\alpha}} O_{s,a}^{-\frac{1}{3}}\right) \le \frac{\alpha}{|\calS|} O_{s,a}^{-\frac{1}{3}}
\end{align*}
Hence, with probability at least $1-\frac{\alpha}{|\calS|} O_{s,\pi(s)}^{-\frac{1}{3}} \ge 1-\frac{\alpha}{|\calS|} \bar{O}^{-\frac{1}{3}}$,
\begin{align*}
    \mathbb{P}_{p\sim \phi_{s,\pi(s)}} \left(\| p - {p}^c_{s,\pi(s)}\|_\infty \ge  3\sqrt{\frac{|\calS|}{\alpha}} O_{s,\pi(s)}^{-\frac{1}{3}}\right) \le \alpha O_{s,\pi(s)}^{-\frac{1}{3}}.
\end{align*}
Recall CVaR$_\alpha(X)$ is the conditional expectation on $\{X\le$ VaR$_\alpha(X)\}$. Let $U = \{p| \| p - {p}^c_{s,\pi(s)}\|_\infty \le  3\sqrt{\frac{|\calS|}{\alpha}} O_{s,\pi(s)}^{-\frac{1}{3}}\} $ and $W = \{p|f(p|s,\pi(s),V) \le (\text{VaR}_\alpha)_{p\sim \phi_{s,\pi(s)}}(f(p|s,\pi(s),V) )  \}$ for $\|V\|_\infty \le \frac{\bar{R}}{1-\lambda}$. Then,  with probability at least $ 1-\frac{\alpha}{|\calS|} \bar{O}^{-\frac{1}{3}}$,
\begin{align*}
    &|(\text{CVaR}_\alpha)_{p\sim \phi_{s,\pi(s)}}(f(p|s,\pi(s),V) ) - f(\hat{p}_{s,\pi(s)} |s,\pi(s),V) |\\
    = &\frac{1}{\alpha}\left| \int_{W \cap U} (f(p|s,\pi(s),V) ) -f(\hat{p}_{s,\pi(s)} |s,\pi(s),V) \mathrm{d}\mathbb{P}_{\phi_{s,\pi(s)}} + \frac{1}{\alpha} \int_{W \cap U^c} (f(p|s,\pi(s),V) ) -f(\hat{p}_{s,\pi(s)} |s,\pi(s),V)\mathrm{d}\mathbb{P}_{\phi_{s,a}}\right|\\
    \le & \frac{1}{\alpha} \int_{W \cap U} \left|(f(p|s,\pi(s),V) ) -f(\hat{p}_{s,\pi(s)} |s,\pi(s),V) \right|\mathrm{d}\mathbb{P}_{\phi_{s,\pi(s)}} + \frac{1}{\alpha} \int_{W \cap U^c} \left|(f(p|s,\pi(s),V) ) -f(\hat{p}_{s,\pi(s)} |s,\pi(s),V)\right|\mathrm{d}\mathbb{P}_{\phi_{s,\pi(s)}}\\
    \le & \frac{1}{\alpha} \left\{ \alpha \cdot O_{s,\pi(s)}^{-\frac{1}{3}} \frac{ 2\bar{R}}{1-\gamma} + \alpha  \cdot 3\sqrt{\frac{|\calS|}{\alpha}} O_{s,\pi(s)}^{-\frac{1}{3}}\frac{|\calS|\bar{R}}{1-\gamma} \right\}\\
    \le &O_{s,\pi(s)}^{-\frac{1}{3}}\sqrt{\frac{|\calS|}{\alpha}} \frac{5|\calS|\bar{R}}{1-\gamma} \\
    \le & \bar{O}^{-\frac{1}{3}}\sqrt{\frac{|\calS|}{\alpha}} \frac{5|\calS|\bar{R}}{1-\gamma} . 
\end{align*}
Then, following the same proof as for VaR$_\alpha$, with probability at least $1-\alpha \bar{O}^{_\frac{1}{3}}$, we can bound 
$$ \|V^{\phi,\pi} - {V}^{c,\pi}\|_\infty \le \bar{O}^{-\frac{1}{3}}\sqrt{\frac{|\calS|}{\alpha}} \frac{5|\calS|\bar{R}}{(1-\gamma)^2} .$$
This completes the proof of the first part.
For the second part, let $\pi^c$ be the optimal policy for $\mathcal{M}^c$. Then we have for both VaR$_\alpha$ and CVaR$_\alpha$, with probability at least  $1-\alpha \bar{O}^{-
\frac{1}{3}}(\pi^c) \ge 1-\alpha \underline{O}^{-\frac{1}{3}}$, 
$$ V^{\phi,*} \ge V^{\phi,\pi^c} \ge {V}^{c,\pi^c} -  \underline{O}^{-\frac{1}{3}} \sqrt{\frac{|\calS|}{\alpha}} \frac{5|\calS|\bar{R}}{(1-\gamma)^2}. $$ Recall $\pi^*$ is the optimal policy for BRMDP, then 
$$ {V}^{c,\pi^c}  \ge V^{c,\pi^*} \ge V^{\phi,*} - \underline{O}^{-\frac{1}{3}} \sqrt{\frac{|\calS|}{\alpha}} \frac{5|\calS|\bar{R}}{(1-\gamma)^2}.$$ Combining together the proof is completed. 

\end{proof}
\subsection*{Proof of Proposition \ref{prop:posterior convergence}}
\begin{proof}
First notice
\begin{align*}
    ||Q^{\phi^t,*} - Q^{\omega,*}||_\infty =& || \mathcal{T}^{\phi^t} Q^{\phi^t,*} - \mathcal{T}^{\phi^\omega}Q^{\omega,*}||_\infty \\
    =& || (\mathcal{T}^{\phi^t}Q^{\phi^t,*}- \mathcal{T}^{\phi^t}Q^{\omega,*}) + (\mathcal{T}^{\phi^t}Q^{\omega,*} - \mathcal{T}^{\phi^\omega}Q^{\omega,*}) ||_\infty \\
    \le & \gamma ||Q^{\phi^t,*} -Q^{\omega,*}||_\infty + ||\mathcal{T}^{\phi^t}Q^{\omega,*} - \mathcal{T}^{\phi^\omega}Q^{\omega,*} ||_\infty.
\end{align*}
Since $||Q^{\omega,*}||_\infty \le \frac{\bar{R}}{1-\gamma}$ almost surely,  we can prove the convergence of $Q^{\phi^t,*}$ if we can show that almost surely for each $(s,a) \in \calS \times \calA$,
\begin{equation} \label{eq:Convergence_operator}
    \sup_{||Q||_\infty \le \frac{\bar{R}}{1-\gamma}}|\mathcal{T}^{\phi^t}Q(s,a) - \mathcal{T}^{\phi^\omega} Q(s,a)| \rightarrow 0 \text{ as } t \rightarrow \infty.
\end{equation}
Fix a state-action pair $(s,a)$, then for any observation process $\omega$, if $||O_{s,a}^\infty(\omega)||_\infty < \infty$, $\phi^t_{s,a} = \phi^\omega_{s,a}$ after some time stage $\tau$ and \eqref{eq:Convergence_operator} clearly holds for such $\omega$. Otherwise, $||O_{s,a}^\infty(\omega)||_\infty = \infty$. For those $\omega$'s, by Bayesian consistency  \cite{doob1949application}, we know that for any neighborhood  of $p^c_{s,a}$ with parameter $\varepsilon >0$, which is defined as  
$
    U_\varepsilon = \left\{ p \in \mathbb{R}^{|\calS|}_+ \left| \sum_{x\in\calS} p(x) =1, |p(x) - p^c_{s,a}(x)| \le \varepsilon, \forall x \in \calS \right.\right\}$,  $\lim_{t \rightarrow \infty} \mathbb{P} (p \in U_\varepsilon |\phi^t_{s,a}) = 1$  \text{ almost surely. }

For any probability mass function $p \in \mathbb{R}^{|\calS|}_+$, Recall $f(p|s,a,Q) = \mathbb{E}_{s'\sim p} [r(s,a,s') + \gamma \max_{b\in\calA} Q(s',b)]$, we have $\forall p \in U_\varepsilon$,
\begin{equation*} 
    \begin{aligned}
    |f(p|s,a,Q) - f(p^c_{s,a}|s,a,Q)| = &\left| \sum_{x \in \calS} [p(x) - p^c_{s,a}(x)][r(s,a,x)+\gamma\max_{b\in\calA}Q(x,b)] \right|\\
    \le & \varepsilon |\calS|[\bar{R}+\gamma||Q||_\infty]\\
    \le & \varepsilon \frac{|\calS|\bar{R}}{1-\gamma}
\end{aligned}
\end{equation*}

In the following we fix a sample path $\omega$ such that $||O^\infty_{s,a}(\omega)||_\infty = \infty$ and drop the notation of $\omega$ for simplicity.  We have almost surely:
\begin{itemize}
    \item If $\rho$ is $\text{VaR}_\alpha$:
Since $\mathbb{P} (p \in U_\varepsilon |\phi^t_{s,a}) \rightarrow 1$, for any risk level $\alpha > 0$, $\mathbb{P} (p \in U_\varepsilon |\phi^t_{s,a}) > 1-\alpha$ for $t$ large enough. Since VaR$_\alpha$ is the $\alpha$-quantitle, we have 
\begin{equation} \label{eq: VaR inf}
    \text{VaR}_\alpha ^{\phi^t_{s,a}} \left(f(p|s,a,Q)\right) \ge  \inf_{p \in U_\varepsilon} f(p|s,a,Q) \ge f(p^c_{s,a}|s,a,Q) - \varepsilon \frac{|\calS|\bar{R}}{1-\gamma}.
\end{equation}
Similarly,
\begin{equation*} 
    \text{VaR}_\alpha ^{\phi^t_{s,a}} \left(f(p|s,a,Q)\right) \le  \sup_{p \in U_\varepsilon} f(p|s,a,Q) \le f(p^c_{s,a}|s,a,Q) + \varepsilon \frac{|\calS|\bar{R}}{1-\gamma}.
\end{equation*}
Combining the two inequalities above,  we have 
 \begin{equation} \label{eq: VaR pc}
     |\text{VaR}_\alpha^{\phi^t_{s,a}} (f(p|s,a,Q)) - f(p^c_{s,a}|s,a,Q)| = |\mathcal{T}^{\phi^t}Q(s,a) -\mathcal{T}^{\phi^\omega}Q(s,a) | \le  \varepsilon \frac{|\calS|\bar{R}}{1-\gamma}
 \end{equation}
 Since this holds for any $Q$ with $||Q||_\infty \le \frac{\bar{R}}{1-\gamma}$ and $\varepsilon > 0$, we know  
 \begin{align*}
     \lim_{t\rightarrow\infty}\sup_{||Q||_\infty < \frac{\bar{R}}{1-\gamma}} |\mathcal{T}^{\phi^t}Q(s,a) -\mathcal{T}^{\phi^\omega}Q(s,a) | = 0.
 \end{align*}
 This completes the proof.

 \item If $\rho$ is $\text{CVaR}_\alpha$:
 \begin{align*}
    \text{CVaR}_\alpha^{\phi^t_{s,a}} \left( f(p|s,a,Q) \right) =&  \text{VaR}_\alpha^{\phi^t_{s,a}}(f(p|s,a,Q)) + \frac{1}{\alpha}\mathbb{E}_{q \sim \phi^t_{s,a}} [ \text{VaR}_\alpha^{\phi^t_{s,a}}(f(p|s,a,Q)) - f(q|s,a,Q)]^+ \\
    =&\text{VaR}_\alpha^{\phi^t_{s,a}}(f(p|s,a,Q)) + \frac{1}{\alpha}\mathbb{E}_{q \sim \phi^t_{s,a}, q \in U_\varepsilon} [ \text{VaR}_\alpha^{\phi^t_{s,a}}(f(p|s,a,Q)) - f(q|s,a,Q)]^+\\
    &+ \frac{1}{\alpha}\mathbb{E}_{q \sim \phi^t_{s,a}, q \in U_\varepsilon^c} [ \text{VaR}_\alpha^{\phi^t_{s,a}}(f(p|s,a,Q)) - f(q|s,a,Q)]^+
\end{align*}
Since $\text{VaR}_\alpha^{\phi^t_{s,a}}(f(p|s,a,Q)) \rightarrow f(p^c_{s,a}|s,a,Q)$ and $\phi^t_{s,a}(U_\varepsilon^c) \rightarrow 0$, we have  for large enough $t$, 
\begin{enumerate}
    \item $| \text{VaR}_\alpha^{\phi^t_{s,a}}(f(p|s,a,Q)) - f(p^c_{s,a}|s,a,Q)| \le \varepsilon,$
    \item $\mathbb{E}_{q \sim \phi^t_{s,a}, q \in U_\varepsilon} [ \text{VaR}_\alpha^{\phi^t_{s,a}}(f(p|s,a,Q)) - f(q|s,a,Q)]^+ \le 2 \varepsilon\frac{|\calS|\Bar{R}}{1-\gamma},  $
    \item $\mathbb{E}_{q \sim \phi^t_{s,a}, q \in U_\varepsilon^c} [ \text{VaR}_\alpha^{\phi^t_{s,a}}(f(p|s,a,Q)) - f(q|s,a,Q)]^+  \le \mathbb{P}(p \in U_\varepsilon^c|\phi^t_{s,a})\frac{2\Bar{R}}{1-\gamma} \le 2\varepsilon \frac{|\calS| \Bar{R}}{1-\gamma}. $
\end{enumerate}  
Then we obtain 
\begin{align*}
   |\text{CVaR}_\alpha^{\phi^t_{s,a}} \left( f(p|s,a,Q) \right)-f(p^c_{s,a}|s,a,Q)| \le \varepsilon\left(1+\frac{4|\calS|\bar{R}}{\alpha(1-\gamma)} \right) 
\end{align*}
Again, by arbitrary $\varepsilon$ and the uniformness in $Q$, we obtain the desired result.
\end{itemize}
\end{proof}

\subsection*{Proof of Theorem \ref{thm:estimator bias}}
\begin{proof}
\textbf{Proof sketch.} The complete proof is technical. We first provide a proof sketch. Intuitively, the bias term converges to $0$ since we either have posterior to concentrate on the true parameter or have an increasing sample size for the Monto Carlo estimator, both of which reduce the bias term to zero asymptotically. However, a major difficulty is uniform convergence (in terms of all possible values of $Q$ and sample size $N$), for which existing results do not give a straightforward guarantee. We prove the results by considering the two cases. First, when infinite observations are available, i.e., $||O^\infty_{s,a}|| = \infty$, we construct i.i.d. samples $h(p_i) \le f(p_i|s,a,Q)$ with the same sampled $p_i$ for an arbitrary given $\varepsilon$, where $f(p_i|s,a,Q)$ is defined as in \eqref{eq: f}. with probability at least $1-\varepsilon$, $h(p_i)$ takes a value no less than $f(p_i|s,a,Q)$ by  a constant multiple of $\varepsilon$. with probability at least $\varepsilon$, $h(p_i)$ takes a value bounded by a constant. With the help of the Stirling formula and subtle calculation, we can bound the expectation of order statistics of $h(p_i)$, which further lower bounds the Bellman operator estimator. The upper bound can be obtained in a similar way. Together we can bound the bias term. Second, in the case of limited observations,  we prove a uniform convergence of empirical distribution for $f(p_i|s,a,Q)$ when $Q$ belongs to a bounded set. We carefully partition the probability space as a union of disjoint rectangular sets, to which the Glivenko-Cantelli theorem can be applied and the uniform convergence of the empirical distributions follows. Combining the two cases we complete the proof.

\textbf{Formal proof.} Again, we fix a state-action pair $(s,a)$, an observation process $\omega$ for which we drop the notation for simplicity.  Let $Q$ be any Q-function such that $||Q||_\infty \le \frac{\bar{R}}{1-\gamma}$. We first prove for the VaR risk functional. 
\subsubsection*{Estimator for VaR}
 we consider the two cases:
\begin{itemize}
    \item $O_{s,a}^\infty = \infty$, then $\phi^t_{s,a}$ is consistent at the true parameter $p^c_{s,a}$. $\forall \varepsilon>0$, $\mathbb{P} (p \in U_\varepsilon |\phi^t_{s,a}) \rightarrow 1$ almost surely for such $\omega$, where $
    U_\varepsilon = \left\{ p \in \mathbb{R}^{|\calS|}_+ \left| \sum_{x\in\calS} p(x) =1, |p(x) - p^c_{s,a}(x)| \le \varepsilon, \forall x \in \calS \right.\right\}$. Then $\forall \varepsilon >0 $, $\mathbb{P} (p \in U_\varepsilon |\phi^t_{s,a}) \ge 1-\varepsilon$ for all large $t$. At iteration $k$ such that $t_k$ large enough, we obtain $p^\kk_1,p^\kk_2,\ldots,p^\kk_{N^\kk_{s,a}} \sim \phi^{t_k}_{s,a}$. For each $p^\kk_i$, we have $\mathbb{P}(p^\kk_i \in U_\varepsilon|\phi^{t_k}_{s,a}) \ge 1-\varepsilon$. Furthermore, since $\phi^{t_k}_{s,a}$ is always smooth function, we can find $\Phi^\kk_i \in U_\varepsilon$ such that $\mathbb{P}(p^\kk_i \in \Phi^\kk_i|\phi^{t_k}_{s,a}) = 1-\varepsilon$. Recall 
    $$f(p|s,a,Q) = \sum_{s'\in\calS}p(s')[r(s,a,s') + \gamma\max_{b\in\calA}Q(s,a)].$$ Define
    \begin{align*}
        h(p^\kk_i) = \left\{ 
        \begin{aligned}
            &\inf_{p \in U_\varepsilon} f(p|s,a,Q)  \qquad& \text{ if } p^\kk_i \in \Phi^\kk_i\\
           & \inf_{p} f(p|s,a,Q)  \qquad& \text{ if } p^\kk_i \not \in \Phi^\kk_i
        \end{aligned}\right.
    \end{align*}
    It is easy to see $f(p^\kk_i|s,a,Q) \ge h(p^\kk_i)$. Notice the distribution of sampled $p^\kk_i$ only depends on $\phi^{t_k}_{s,a}, N^\kk_{s,a}$, which are further determined by $\omega_{t_k}$. Conditioned on $\omega_{t_k}$, $h(p^\kk_i) $, $i=1,2,\ldots,N^\kk_{s,a}$ are $i.i.d.$ random variables taking two values $\inf_{p \in U_\varepsilon} f(p|s,a,Q) $ and $\inf_{p} f(p|s,a,Q) $ with probability at least $1-\varepsilon$ and $\varepsilon$, respectively. Notice $\inf_{p} f(p|s,a,Q) \ge -\frac{\Bar{R}}{1-\gamma} $. And
    \begin{align*}
        &|\inf_{p \in U_\varepsilon} f(p|s,a,Q)-\text{VaR}^{\phi^{t_k}_{s,a}}_\alpha(f(p|s,a,Q))| \\
        \le& |\inf_{p \in U_\varepsilon} f(p|s,a,Q)-f(p^c_{s,a}|s,a,Q)| + |f(p^c_{s,a}|s,a,Q) - \text{VaR}^{\phi^{t_k}_{s,a}}_\alpha(f(p|s,a,Q))|. 
    \end{align*}
 By \eqref{eq: VaR inf}, we have $|\inf_{p \in U_\varepsilon} f(p|s,a,Q)-f(p^c_{s,a}|s,a,Q)| \le \varepsilon \frac{|\calS| ||Q||_\infty}{1-\gamma} $. By \eqref{eq: VaR pc}, we have $|f(p^c_{s,a}|s,a,Q) - \text{VaR}^{\phi^{t_k}_{s,a}}_\alpha(f(p|s,a,Q))| \le \varepsilon \frac{|\calS| ||Q||_\infty}{1-\gamma}  $ for $k$ large enough. Hence, we obtain
 $$|\inf_{p \in U_\varepsilon} f(p|s,a,Q)-\text{VaR}^{\phi^{t_k}_{s,a}}_\alpha(f(p|s,a,Q))| \le 2
\varepsilon \frac{|\calS| ||Q||_\infty}{1-\gamma}, $$
    where $||Q||_\infty$ again can be bounded by $\frac{\Bar{R}}{1-\gamma} $. Denote by $X_i = f(p^\kk_i|s,a,Q)$ and $Y_i = h(p^\kk_i)$. We  first bound the conditional expectation of the order statistic $Y_{\lceil N^\kk_{s,a}\alpha\rceil:N^\kk_{s,a}}$, i.e., $E[Y_{\lceil N^\kk_{s,a}\alpha\rceil:N^\kk_{s,a}}| \omega_{t_k}]$.
    
Since $\{Y_i|\omega_{t_k}\}$ are i.i.d., we can compute the conditional mass function of $ Y_{\lceil n\alpha\rceil:n}$ as 
\begin{align}
    \mathbb{P}(Y_{\lceil n\alpha\rceil:n} = \inf_{p } f(p|s,a,Q) | \omega_{t_k}) = \sum_{j\ge \lceil n\alpha\rceil } \left( 
    \begin{aligned}
    n\\
    j
    \end{aligned}
    \right)\varepsilon^j(1-\varepsilon)^{n-j}. \label{eq:bernoulli_OS}
\end{align}
We now lower bound the conditional expectation of $Y_{\lceil n\alpha\rceil:n}$ by upper bound \eqref{eq:bernoulli_OS}. First, we can show
$ \left( 
    \begin{aligned}
    n\\
    j
    \end{aligned}
    \right)\varepsilon^j(1-\varepsilon)^{n-j}$ is decreasing in term of $j$ for $j > n\varepsilon$. To see this, take logarithm on the right hand side we have
    
    \begin{align*}
        \eta(j) = \log(n!) -\log((n-j)!j!) + j\log\varepsilon+(n-j)\log(1-\varepsilon).
    \end{align*}
    Compute the difference
    \begin{align*}
        \eta(j+1) - \eta(j) = \log(\frac{n-j}{j+1} \cdot \frac{\varepsilon}{1-\varepsilon}).
    \end{align*}
    Then 
 $ \eta(j+1) - \eta(j) < 0  \Longleftrightarrow j \ge (n+1)\varepsilon -1 $. Hence for $\varepsilon < \alpha$ and $j \ge n\alpha \ge n\varepsilon >(n+1)\varepsilon -1 $, $\eta(j)$ is decreasing. Hence we can upper bound \eqref{eq:bernoulli_OS} by 

\begin{align}
     &(n+1-\lceil n\alpha \rceil)\left( 
    \begin{aligned}
    &\ n\notag\\
    \lceil &n\alpha \rceil
    \end{aligned}
    \right)\varepsilon^{\lceil n\alpha \rceil}(1-\varepsilon)^{n-\lceil n\alpha \rceil}. \\
    = & (n+1-\lceil n\alpha \rceil) \frac{n!}{(n-\lceil n\alpha \rceil)!\lceil n\alpha \rceil! }\varepsilon^{\lceil n\alpha \rceil}(1-\varepsilon)^{n-\lceil n\alpha \rceil}\label{eq:stirling_bound}
 \end{align}
 By Stirling Formula, 
 $$\sqrt{2 \pi n}\left(\frac{n}{e}\right)^n <n !<\sqrt{2 \pi n}\left(\frac{n}{e}\right)^n e^{\frac{1}{12 n}} <2 \sqrt{2 \pi n}\left(\frac{n}{e}\right)^n.$$
 Then we have 
 \begin{align*}
         \eqref{eq:stirling_bound} &< (n+1-\lceil n\alpha \rceil)\frac{2 \sqrt{2 \pi n}\left(\frac{n}{e}\right)^n \varepsilon^{\lceil n\alpha \rceil}(1-\varepsilon)^{n-\lceil n\alpha \rceil}}{\sqrt{2 \pi (n-\lceil n\alpha \rceil)}\left(\frac{n-\lceil n\alpha \rceil}{e}\right)^{n-\lceil n\alpha \rceil}\sqrt{2 \pi \lceil n\alpha \rceil}\left(\frac{\lceil n\alpha \rceil}{e}\right)^{\lceil n\alpha \rceil}}\\
         &= (n+1-\lceil n\alpha \rceil)\frac{ \sqrt{2} \varepsilon^{\lceil n\alpha \rceil}(1-\varepsilon)^{n-\lceil n\alpha \rceil}}{\sqrt{ (1-\frac{\lceil n\alpha \rceil}{n})}\left({1-\frac{\lceil n\alpha \rceil}{n}}\right)^{n-\lceil n\alpha \rceil}\sqrt{\pi \lceil n\alpha \rceil}\left(\frac{\lceil n\alpha \rceil}{n}\right)^{\lceil n\alpha \rceil}}\\
 \end{align*}
 For $n \ge \frac{2}{\alpha(1-\alpha)}$, we have 
 \begin{enumerate}
     \item $ (n+1-\lceil n\alpha \rceil) \le 2n(1-\alpha)$,
     \item $\varepsilon^{\lceil n\alpha \rceil} \le \varepsilon^{n\alpha}$,
     \item $(1-\varepsilon)^{n-\lceil n\alpha \rceil} \le (1-\varepsilon)^{n(1-\alpha)} /(1-\varepsilon)$,
     \item $(\frac{\lceil n\alpha \rceil}{n})^{\lceil n\alpha \rceil}\ge (\frac{\lceil n\alpha \rceil}{n})^{n\alpha +1}\ge \alpha^{n\alpha} \cdot \alpha$,
     \item $1-\frac{\lceil n\alpha \rceil}{n}  \ge \frac{1-\alpha}{2}$,
     \item $(1-\frac{\lceil n\alpha \rceil}{n})^{n-\lceil n\alpha \rceil} \ge (1-\frac{\lceil n\alpha \rceil}{n})^{n(1-\alpha)}\ge (\frac{1-\alpha}{2})^{n(1-\alpha)}$.
    
 \end{enumerate}
 Then
 \begin{align*}
     \eqref{eq:stirling_bound} &\le 2\sqrt{\frac{2}{\pi}} n(1-\alpha)\frac{\varepsilon^{n\alpha} (1-\varepsilon)^{n(1-\alpha)} /(1-\varepsilon) }{\sqrt{\frac{1-\alpha}{2}}(\frac{1-\alpha}{2})^{n(1-\alpha)} \sqrt{ n\alpha }\alpha^{n\alpha} \cdot \alpha}\\
     & = \frac{4\sqrt{(1-\alpha)}}{(1-\varepsilon)\alpha^{\frac{3}{2}}\sqrt{\pi}} \sqrt{n} \left(\frac{\varepsilon^\alpha(1-\varepsilon)^{1-\alpha}}{\alpha^\alpha (\frac{1-\alpha}{2})^{1-\alpha}}\right)^n\\
     & = \frac{C}{1-\varepsilon}  \sqrt{n} \beta^n,
 \end{align*}
 where $C =  \frac{4\sqrt{(1-\alpha)}}{\alpha^{\frac{3}{2}}\sqrt{\pi}}$ is a constant and $ \beta = \frac{\varepsilon^\alpha(1-\varepsilon)^{1-\alpha}}{\alpha^\alpha (\frac{1-\alpha}{2})^{1-\alpha}}$. For $\varepsilon$ small enough, we can ensure $\beta < 1$. Let 
 $$ \zeta(x) = \log(\sqrt{x}\beta^x) = \frac{1}{2}\log x + x\log\beta. $$ 
 Then, since $\log\beta<0$, $\zeta(x)$ attains the maximum at $x = -\frac{2}{\log\beta}$. Then we have
 \begin{align*}
    \frac{C}{1-\varepsilon}  \sqrt{n} \beta^n  =&\frac{C}{1-\varepsilon}\mathrm{e}^{\zeta(n)}\\
    \le& \frac{C}{1-\varepsilon}\mathrm{e}^{\zeta(-\frac{2}{\log\beta})}\\
    =& \frac{C}{1-\varepsilon}\sqrt{-\frac{2}{\log\beta}}\mathrm{e}^{-2}\\
 \end{align*}
  For $n \le \lfloor \frac{2}{\alpha(1-\alpha)} \rfloor$, we have
 \begin{align*}
     &\mathbb{P}(Y_{\lceil n\alpha\rceil:n} = \inf_{p } f(p|s,a,Q)|\omega_{t_k}) \\
     \le& \mathbb{P}(Y_{1:n} = \inf_{p } f(p|s,a,Q)|\omega_{t_k})\\
     = & 1 - (1-\varepsilon)^n\\
     \le & 1 - (1-\varepsilon)^{ \lfloor \frac{2}{\alpha(1-\alpha)} \rfloor}
 \end{align*}
 Let 
 $$ C'(\varepsilon):= \max\{ \frac{C}{1-\varepsilon}\sqrt{-\frac{2}{\log\beta}}\mathrm{e}^{-2},1 - (1-\varepsilon)^{ \lfloor \frac{2}{\alpha(1-\alpha)}\rfloor} \},$$
 where $C'(\varepsilon) \rightarrow 0$ as $\varepsilon \rightarrow 0$.
\\
 Then we have 
 \begin{align*}
     &\mathbb{E} [Y_{\lceil n\alpha\rceil:n}|\omega_{t_k}]\\
     =&  \mathbb{P}(Y_{\lceil n\alpha\rceil:n} = \inf_{pi } f(p|s,a,Q)|\omega_{t_k}) * (\inf_{p } f(p|s,a,Q)) +  \mathbb{P}(Y_{\lceil n\alpha\rceil:n} =\inf_{p \in U_\varepsilon} f(p|s,a,Q)|\omega_{t_k}) * \inf_{p \in U_\varepsilon} f(p|s,a,Q)\\
     \ge & \mathbb{P}(Y_{\lceil n\alpha\rceil:n} = \inf_{p } f(p|s,a,Q)|\omega_{t_k}) * (-\frac{\Bar{R}}{1-\gamma}) \\
     &+ \mathbb{P}(Y_{\lceil n\alpha\rceil:n} =\inf_{p \in U_\varepsilon} f(p|s,a,Q)|\omega_{t_k}) * (\text{VaR}^{\phi^{t_k}_{s,a}}_\alpha(f(p|s,a,Q))- 2 \varepsilon |\calS| ||Q||_\infty)\\
     \ge & C'(\varepsilon) *(-\frac{\Bar{R}}{1-\gamma}) + (1-C'(\varepsilon)) * (\text{VaR}^{\phi^{t_k}_{s,a}}_\alpha(f(p|s,a,Q))- 2 \varepsilon |\calS| \frac{\Bar{R}}{1-\gamma})
 \end{align*}

 Hence we have almost surely, 
 \begin{align*}
      &\mathbb{E}[X_{\lceil N^\kk_{s,a}\alpha\rceil:N^\kk_{s,a}} -\text{VaR}^{\phi^t_{s,a}}_\alpha(f(p|s,a,Q)) | \omega_{t_k}] \\
      \ge & \mathbb{E}[Y_{\lceil N^\kk_{s,a}\alpha\rceil:N^\kk_{s,a}} -\text{VaR}^{\phi^{t_k}_{s,a}}_\alpha(f(p|s,a,Q)) |\omega_{t_k}]\\
      \ge & -C'(\varepsilon)\left(  (\frac{\Bar{R}}{1-\gamma}) +  \text{VaR}^{\phi^{t_k}_{s,a}}_\alpha(f(p|s,a,Q))\right) - 2 \varepsilon (1-C'(\varepsilon))|\calS|\frac{\Bar{R}}{1-\gamma} \\
      \ge &-2(C'(\varepsilon)+\varepsilon (1-C'(\varepsilon))|\calS| )\left( \frac{\Bar{R}}{1-\gamma} \right) \\
      =&:-\Tilde{C}(\varepsilon),
 \end{align*}
 where $\Tilde{C}(\varepsilon) \rightarrow 0$ as $ \varepsilon \rightarrow 0$.\\
 Similarly by constructing 
     \begin{align*}
        h(p^\kk_i) = \left\{ 
        \begin{aligned}
            &\sup_{p \in U_\varepsilon} f(p|s,a,Q)  \qquad& \text{ if } p^\kk_i \in \Psi_i\\
           & \sup_{p} f(p|s,a,Q)  \qquad& \text{ if } p^\kk_i \not \in \Psi_i
        \end{aligned}\right.
    \end{align*}
We can obtain 
    \begin{align*}
      &\mathbb{E} [X_{\lceil N^\kk_{s,a}\alpha\rceil:N^\kk_{s,a}} -\text{VaR}^{\phi^{t_k}_{s,a}}_\alpha(f(p|s,a,Q)) |\omega_{t_k}] \le \widehat{C}(\varepsilon),
 \end{align*}
almost surely for $k$ large enough and $ \widehat{C}(\varepsilon) \rightarrow 0$ as $ \varepsilon \rightarrow 0$.\\
Notice 
$ \mathcal{T}^{\phi^{t_k}}Q(s,a) = \text{VaR}^{\phi^{t_k}_{s,a}}_\alpha(f(p|s,a,Q))$.
Furthermore, since both $ \Tilde{C}(\varepsilon)$ and $\widehat{C}(\varepsilon)$ do not depend on $Q$ and by arbitrary $\varepsilon$,  we obtain  $$\lim_{k\rightarrow\infty} \sup_{||Q||_\infty \le \frac{\Bar{R}}{1-\gamma}}  \left |\mathbb{E}[\widehat{\mathcal{T}}^\kk Q(s,a) - \mathcal{T}^{\phi^{t_k}}Q(s,a) | \omega_{t_k}]\right| = 0.$$
 
 \item If $O^\infty_{s,a} < \infty$, then the posterior distribution on $(s,a)$ remains the same as $\phi^\omega_{s,a}$ for all large $t$ and the sample size $N^\kk_{s,a}$ tends to infinity. In this case, by Glivenko-Cantelli theorem, we know almost surely the empirical distribution conditioned on $\phi^\omega_{s,a}$ (which is further determined by $\omega$), $F^\kk_{N^\kk_{s,a}}$,  for the sampled $p^\kk_1, p^\kk_2,\ldots, p^\kk_{N^\kk_{s,a}}$ uniformly converges to the distribution $\phi^\omega_{s,a}$  as sample size ${N^\kk_{s,a}}$ goes to infinity. 
 Denote by  $X^\kk_i = f(p^\kk_i|s,a,Q) = \sum_{s'\in \calS}p^\kk_i(s')(r(s,a,s') +\gamma\max_{b\in\calA}Q(s',b)) = \sum_{s'\in\calS}d_{s'}p^\kk_i(s') = d^\top p^\kk_i$. We want to show  uniform convergence of empirical distribution  of $\{X^\kk_i \} _{i=1,2,\ldots,N^\kk_{s,a}}$  conditioned on $\phi^\omega_{s,a}$ for $d \in D$ uniformly, where $D$ contains all the possible value $d$ can take.  Denote by $G^d_n$ the empirical distribution conditioned on $\omega_{t_k}$ of $\{X^\kk_i\}_{i=1,2,\ldots,n}$  in terms of $d$ and $G^d$ the true distribution of $d^\top p$ for $p \sim \phi^\omega_{s,a}$. We want to show
 
 \begin{align*}
     \lim_{n\rightarrow \infty} \sup_{d\in D, x} |G^d_n(x)-G^d(x)| \rightarrow 0 \qquad \text{ almost surely.}
 \end{align*}
 
 Notice that $|d| \le \frac{\bar{R}}{1-\gamma}$ and $(d+\varepsilon 1)^\top \phi = d^\top \phi + \varepsilon$, we have $G^d_n(x) = G^{d+\varepsilon1}_n(x+\varepsilon)$. Hence we only need to prove for $D = \{d\in R^{|\calS|}: M\mathbf{1}\ge d \ge \mathbf{1}\}$, where $M = \frac{2\bar{R}}{1-\gamma} + 1$. Denote by $\mathbb{P}_n (K) = \frac{1}{n}\sum_{i=1}^n\mathbf{1}\{p_i \in K \}$ where $p_1,\cdots,p_n \sim \phi^\omega_{s,a}$ are i.i.d samples. 
 
 For an arbitrary $\varepsilon = \frac{1}{\widetilde{N}}$, we have 
 \footnotesize
 \begin{align*}
     &\mathbb{P}_n \left( \sum_{s=1}^{|\calS|}d_s q_s \le x \right)\\
     =&\sum_{k_2 = 1}^{\widetilde{N}} \cdots \sum_{k_{|\calS|}=1}^{\widetilde{N}} \mathbb{P}_n (\sum_{s=1}^{|\calS|}d_s q_s\le x ,(k_i-1)\varepsilon <q_i\le k_i\varepsilon, i=2,\ldots,|\calS|)\\
     \le& \sum_{k_2 = 1}^{\widetilde{N}} \cdots \sum_{k_{|\calS|}=1}^{\widetilde{N}} \mathbb{P}_n \left( q_1\le\frac{1}{ d_1}(x - \sum_{s=2}^{|\calS|}d_s(k_s-1)\varepsilon) ,(k_i-1)\varepsilon <q_i\le k_i\varepsilon, i=2,\ldots,|\calS|\right)
 \end{align*}
 Since the last probability is just some probability of rectangulars, it can be expressed by finite number of cumulative distribution function, which by the Glivenko-Cantelli theorem, uniformly converge to the true distribution. Then, almosdt surely there exists $n_\varepsilon >0$ (independent of $d, \varepsilon, k_i$), for $n\ge n_\varepsilon$, we have each 
 \begin{align*}
    &\mathbb{P}_n \left( q_1\le\frac{1}{ d_1}(x - \sum_{s=2}^{|\calS|}d_s(k_s-1)\varepsilon) ,(k_i-1)\varepsilon <q_i\le k_i\varepsilon, i=2,\ldots,|\calS|\right)\\
    \le &\mathbb{P}_{q\sim \phi^\omega_{s,a}} \left(q_1\le\frac{1}{ d_1}(x - \sum_{s=2}^{|\calS|}d_s(k_s-1)\varepsilon) ,(k_i-1)\varepsilon <q_i\le k_i\varepsilon, i=2,\ldots,|\calS| \right) + \frac{1}{{\widetilde{N}}^{|\calS|}}
 \end{align*}
 Hence we obtain
 \begin{align*}
     &\mathbb{P}_n \left( \sum_{s=1}^{|\calS|}d_s q_s\le x \right)\\
     \le & \sum_{k_2 = 1}^{\widetilde{N}} \cdots \sum_{k_{|\calS|}=1}^{\widetilde{N}}\mathbb{P}_{q\sim \phi^\omega_{s,a}} \left(q_1\le\frac{1}{ d_1}(x - \sum_{s=2}^{|\calS|}d_s(k_s-1)\varepsilon) ,(k_i-1)\varepsilon <q_i\le k_i\varepsilon, i=2,\ldots,|\calS|\right) + \frac{1}{{\widetilde{N}}}\\
     \le & \sum_{k_2 = 1}^{\widetilde{N}} \cdots \sum_{k_{|\calS|}=1}^{\widetilde{N}}\mathbb{P}_{q\sim \phi^\omega_{s,a}} \left( \sum_{s=1}^{|\calS|}d_s q_s \le x + \sum_{s=2}^{|\calS|}d_s\varepsilon ,(k_i-1)\varepsilon <q_i\le k_i\varepsilon, i=2,\ldots,|\calS|\right) + \frac{1}{{\widetilde{N}}}\\
     =& \mathbb{P}_{q\sim \phi^\omega_{s,a}} \left( \sum_{s=1}^{|\calS|}d_s q_s\le x+ \sum_{s=2}^{|\calS|}d_s\varepsilon  \right)+ \frac{1}{{\widetilde{N}}}\\
     \le & \mathbb{P}_{q\sim \phi^\omega_{s,a}} \left( \sum_{s=1}^{|\calS|}d_s q_s\le x+ \frac{(|\calS|-1)M}{{\widetilde{N}}}  \right)+ \frac{1}{{\widetilde{N}}}
 \end{align*}
 Since this holds  for any $d,x$, we have 
 \begin{align*}
     &\sup_{d\in D,x\in R} \left\{ \mathbb{P}_n \left( \sum_{s=1}^{|\calS|}d_s q_s\le x \right) -\mathbb{P}_{q\sim \phi^\omega_{s,a}} \left( \sum_{s=1}^{|\calS|}d_s q_s\le x \right)\right\} \\
    \le & \frac{1}{{\widetilde{N}}} +\sup_{d\in D,x\in R} \left\{\mathbb{P}_{q\sim \phi^\omega_{s,a}} \left( \sum_{s=1}^{|\calS|}d_s q_s
     \le x+ \frac{(|\calS|-1)M}{{\widetilde{N}}}  \right)- \mathbb{P}_{q\sim \phi^\omega_{s,a}} \left( \sum_{s=1}^{|\calS|}d_s q_s\le x \right)\right\}
 \end{align*} 
 The last term converges to zero as ${\widetilde{N}}\rightarrow\infty$. This is indicated by 
 \begin{align*}
    & \mathbb{P}_{q\sim \phi^\omega_{s,a}} \left( \sum_{s=1}^{|\calS|}d_s q_s   \le x+ \frac{(|\calS|-1)M}{\widetilde{N}}  \right)- \mathbb{P}_{q\sim \phi^\omega_{s,a}} \left( \sum_{s=1}^{|\calS|}d_s q_s\le x \right) \\
   = & \mathbb{P}_{q\sim \phi^\omega_{s,a}} \left(x < \sum_{s=1}^{|\calS|}d_s q_s   \le x+ \frac{(|\calS|-1)M}{\widetilde{N}}  \right)\\
   \le &  \sum_{s=1}^\calS \mathbb{P}_{q\sim \phi^\omega_{s,a}} \left( \frac{x}{d_s} < q_s   \le \frac{x}{d_s}+ \frac{(|\calS|-1)M}{d_s\widetilde{N}}  \right)\\
   \le & \sum_{s=1}^\calS \mathbb{P}_{q\sim \phi^\omega_{s,a}} \left( \frac{x}{d_s} < q_s   \le \frac{x}{d_s}+ \frac{(|\calS|-1)M}{\widetilde{N}}  \right)
 \end{align*}
Since each $0 \le q_s \le 1$ and its marginal distribution $\phi^\omega_{s,a}(q_s)$ is continuous, it is uniformly continuous in $0\le q \le1$. Hence each $\mathbb{P}_{q\sim \phi^\omega_{s,a}} \left( \frac{x}{d_s} < q_s   \le \frac{x}{d_s}+ \frac{(|\calS|-1)M}{\widetilde{N}}  \right)$ converge uniformly to $0$ as $\widetilde{N} \rightarrow \infty$.
Hence by arbitrary $\widetilde{N}$, we have 
 $$\lim\sup_{n\rightarrow \infty} \sup_{d\in D, x\in R} \left\{ \mathbb{P}_n \left( \sum_{s=1}^{|\calS|}d_s q_s\le x \right) -  \mathbb{P}_{q\sim \phi^\omega_{s,a}} \left( \sum_{s=1}^{|\calS|}d_s q_s\le x \right)\right\}\le 0.$$
 Similarly, we can obtain the other side as 
 $$\lim\inf_{n\rightarrow \infty} \inf_{d\in D, x\in R} \left\{ \mathbb{P}_n \left( \sum_{s=1}^{|\calS|}d_s q_s\le x \right) -  \mathbb{P}_{q\sim \phi^\omega_{s,a}} \left( \sum_{s=1}^{|\calS|}d_s q_s\le x \right)\right\}\ge 0.$$
Hence,
 \begin{align*}
     \sup_{d\in D, x} |G^d_n(x)-G^d(x)| \rightarrow 0 \qquad \text{ almost surely.}
 \end{align*}
We obtain the uniform convergence for  empirical distribution in $d$, or equivalently, $Q$, conditioned on the observation process $\omega$. Hence the quantile estimator $X_{\lceil n\alpha\rceil:n} \rightarrow \text{VaR}_\alpha^{\phi^\omega_{s,a}}(Q)$ uniformly in $Q$ almost surely.  Furthermore, since $X_i$ is bounded,
$$\lim_{k\rightarrow\infty} \sup_{||Q||_\infty \le \frac{\Bar{R}}{1-\gamma}}  \left |\mathbb{E}[\widehat{\mathcal{T}}^\kk Q(s,a) - \mathcal{T}^{\phi^{t_k}}Q(s,a) | \omega_{t_k}]\right| = 0 \qquad \text{ almost surely }.$$

\end{itemize}

\subsubsection*{Estimator for CVaR}
 Given sample $p^\kk_1,p^\kk_2,\ldots,p^\kk_{N^\kk_{s,a}}$ and $X_i = f(p^\kk_i|s,a,Q) = \sum_{s' \in \calS} p^\kk_i(s') [r(s,a,s') + \gamma \max_{b\in\calA}Q(s',b)]$. The Monto Carlo estimator for CVaR$_\alpha^{\phi^{t_k}_{s,a}}(f(p|s,a,Q))$ is $\frac{1}{\lceil {N^\kk_{s,a}}\alpha \rceil}\sum_{j=1}^{\lceil {N^\kk_{s,a}}\alpha\rceil} X_{j:{N^\kk_{s,a}}}$.  Again we consider the two cases:

\begin{itemize}
    \item $O^\infty_{s.a} = \infty$. Then we know almost surely $\phi^t_{s,a}$ is consistent at $p^c_{s,a}$. Then by the proof for bias of  VaR estimator,  $\forall 1 > \varepsilon > 0$, $\mathbb{E}[X_{\lceil {N^\kk_{s,a}}\varepsilon\rceil:{N^\kk_{s,a}}}|\omega] \rightarrow \text{VaR}_\varepsilon^{\phi^{t_k}_{s,a}}(f(p|s,a,Q))$ as $k\rightarrow \infty$ uniformly in $Q$ almost surely. Then 
    \begin{align*}
        &\mathbb{E} \left[\frac{1}{\lceil {N^\kk_{s,a}}\alpha \rceil}\sum_{j=1}^{\lceil {N^\kk_{s,a}}\alpha\rceil} X_{j:{N^\kk_{s,a}}} |\omega_{t_k}\right ]\\
        = & \mathbb{E} \left[\frac{1}{\lceil {N^\kk_{s,a}}\alpha \rceil}\sum_{j=1}^{\lceil {N^\kk_{s,a}}\varepsilon\rceil} X_{j:{N^\kk_{s,a}}} |\omega_{t_k}\right ] + \mathbb{E} \left[\frac{1}{\lceil {N^\kk_{s,a}}\alpha \rceil}\sum_{j=\lceil {N^\kk_{s,a}}\varepsilon\rceil}^{\lceil {N^\kk_{s,a}}\alpha \rceil} X_{j:{N^\kk_{s,a}}} |\omega_{t_k}\right ]\\
        \ge& -\frac{\lceil {N^\kk_{s,a}}\varepsilon \rceil}{\lceil {N^\kk_{s,a}}\alpha \rceil} \frac{\bar{R}}{1-\gamma} + \frac{\lceil {N^\kk_{s,a}}\alpha \rceil - \lceil {N^\kk_{s,a}}\varepsilon \rceil}{\lceil {N^\kk_{s,a}}\alpha \rceil}\mathbb{E}[X_{\lceil {N^\kk_{s,a}}\varepsilon\rceil:{N^\kk_{s,a}}} |\omega_{t_k}]
    \end{align*}
Since $\mathbb{E}[X_{\lceil {N^\kk_{s,a}}\varepsilon\rceil:{N^\kk_{s,a}}}|\phi^{t_k}_{s,a}] \rightarrow \text{VaR}_\varepsilon^{\phi^{t_k}_{s,a}}(f(p|s,a,Q))$, $\text{VaR}_\varepsilon^{\phi^{t_k}_{s,a}}(f(p|s,a,Q)) \rightarrow f(p^c_{s,a}|s,a,Q)$, $\text{CVaR}_\alpha^{\phi^t_{s,a}}(f(p|s,a,Q)) \rightarrow f(p^c_{s,a}|s,a,Q)$ uniformly for $-\frac{\bar{R}}{1-\gamma}\le Q\le \frac{\bar{R}}{1-\gamma} $ as $k\rightarrow \infty$ and $\varepsilon$ is chosen arbitrarily, we have $\lim\inf_{k\rightarrow\infty} \{  \mathbb{E} \left[\frac{1}{\lceil {N^\kk_{s,a}}\alpha \rceil}\sum_{j=1}^{\lceil {N^\kk_{s,a}}\alpha\rceil} X_{j:{N^\kk_{s,a}}} |\omega_{t_k}\right ] - \text{CVaR}_\alpha^{\phi^{t_k}_{s,a}}(f(p|s,a,Q)) \} \ge 0$ uniformly in $Q$ almost surely. Similarly we can also prove $\lim\sup_{k\rightarrow\infty} \{  \mathbb{E} \left[\frac{1}{\lceil n\alpha \rceil}\sum_{j=1}^{\lceil n\alpha\rceil} X_{j:n} |\omega_{t_k}\right ] - \text{CVaR}_\alpha^{\phi^{t_k}_{s,a}}(f(p|s,a,Q)) \} \le 0$ uniformly in $Q$ almost surely. Combining the two inequalities together, we obtain 
$$\lim_{k\rightarrow\infty} \sup_{||Q||_\infty \le \frac{\Bar{R}}{1-\gamma}}  \left |\mathbb{E}[\widehat{\mathcal{T}}^\kk Q(s,a) - \mathcal{T}^{\phi^{t_k}}Q(s,a) | \omega_{t_k}]\right| = 0 \qquad \text{ almost surely }.$$
    \item For $O^\infty_{s,a}(\infty) < \infty$. By proof of VaR estimator we know the empirical distribution of $f(p^\kk_i|s,a,Q)$ conditioned on $\omega$, converge uniformly to $\phi^\omega_{s,a}$. Hence the CVaR estimator also has the desired result. 
\end{itemize}
\end{proof}

\begin{lemma} \cite{singh2000convergence}\label{lem: stochastic approximation}
 Consider a stochastic process $\left(\alpha_t, \Delta_t, g\right), t \geq 0$, where $\alpha_t, \Delta_t, g: X \rightarrow \mathfrak{R}$ satisfy the equations
$$
\Delta_{t+1}(x)=\left(1-\alpha_t(x)\right) \Delta_t(x)+\alpha_t(x) g(x), \quad x \in X, \quad t=0,1,2, \ldots
$$
Let $P_t$ be a sequence of increasing $\sigma$-fields such that $\alpha_0$ and $\Delta_0$ are $P_0$-measurable and $\alpha_t, \Delta_t$ and $F_{t-1}$ are $P_t$-measurable. Let $||\cdot||_W$ denote some weighted maximum norm. Assume that the following hold:
\begin{enumerate}
    \item the set $X$ is finite.
    \item $0 \leq \alpha_t(x) \leq 1, \sum_t \alpha_t(x)=\infty, \sum_t \alpha_t^2(x)<\infty w . p .1$.
    \item  $\left\|E\left\{g(\cdot) \mid P_t\right\}\right\|_W \leq \kappa\left\|\Delta_t\right\|_W+c_t$, where $\kappa \in[0,1)$ and $c_t$ converges to zero w.p.1. 
    \item $\operatorname{Var}\left\{g(x) \mid P_t\right\} \leq K\left(1+\left\|\Delta_t\right\|_W\right)^2$, where $K$ is some constant.
\end{enumerate}
Then, $\Delta_t$ converges to zero with probability one (w.p.1).

\end{lemma}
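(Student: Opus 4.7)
The plan is to follow the classical stochastic approximation template (in the spirit of Bertsekas–Tsitsiklis and Jaakkola–Jordan–Singh), adapted for the contraction-in-weighted-sup-norm structure. First I would absorb the noise into a martingale-difference sequence by writing $g(x) = h_t(x) + w_t(x)$, where $h_t(x) := E[g(x)\mid P_t]$ and $w_t(x) := g(x) - h_t(x)$. Then the recursion splits cleanly as $\Delta_t = Y_t + Z_t$, where
\begin{equation*}
Y_{t+1}(x) = (1-\alpha_t(x)) Y_t(x) + \alpha_t(x) h_t(x), \qquad Y_0 = \Delta_0,
\end{equation*}
and
\begin{equation*}
Z_{t+1}(x) = (1-\alpha_t(x)) Z_t(x) + \alpha_t(x) w_t(x), \qquad Z_0 = 0.
\end{equation*}
The task reduces to showing $\|Y_t\|_W \to 0$ and $\|Z_t\|_W \to 0$ almost surely, using (Cond.~3) for $Y_t$ and a martingale argument for $Z_t$.

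Next I would handle the noise process $Z_t(x)$ for each fixed $x \in X$. Since $E[w_t(x)\mid P_t] = 0$, the term $\alpha_t(x) w_t(x)$ is a martingale-difference increment with conditional variance bounded by $K\alpha_t(x)^2(1+\|\Delta_t\|_W)^2$. Provided that $\|\Delta_t\|_W$ is almost surely bounded, condition $\sum_t \alpha_t^2(x) < \infty$ makes the sum of conditional variances of $\alpha_t(x) w_t(x)$ finite, and a Robbins–Siegmund / martingale $L^2$ convergence argument gives $Z_t(x) \to 0$ a.s.; finiteness of $X$ then upgrades this to $\|Z_t\|_W \to 0$.

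The main obstacle is the chicken-and-egg issue in (Cond.~4): the noise variance depends on $\|\Delta_t\|_W$, which is itself driven by the noise. To break the loop I would prove almost sure boundedness of $\|\Delta_t\|_W$ via a rescaling / stopping-time argument. Fix a large $M$ and define the scaled process $\widetilde{\Delta}_t = \Delta_t / \max(M, \|\Delta_t\|_W)$; the contraction condition $\|h_t\|_W \le \kappa \|\Delta_t\|_W + c_t$ with $\kappa < 1$ combined with the $\sum \alpha_t^2$-summable noise shows (Lyapunov-style) that the drift of $\|\Delta_t\|_W^2$ is negative once $\|\Delta_t\|_W$ exceeds a threshold determined by $\kappa$, $K$, and $\sup_t c_t$. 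Formally, one shows $\limsup_t \|\Delta_t\|_W \le C$ for some random but finite $C$ by iterating the inequality $E[\|\Delta_{t+1}\|_W^2 \mid P_t] \le (1-2\alpha_t(1-\kappa) + O(\alpha_t^2))\|\Delta_t\|_W^2 + O(\alpha_t^2)$ and applying Robbins–Siegmund. This is the technically heaviest step, since one must be careful that the weighted sup-norm does not interact badly with the coordinate-wise recursion — I would use the finiteness of $X$ to reduce to coordinate-wise analysis plus a final max.

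Finally, given boundedness and $\|Z_t\|_W \to 0$, I would close the argument on the deterministic part $Y_t$ by a shrinkage iteration. From (Cond.~3), $\|h_t\|_W \le \kappa \|\Delta_t\|_W + c_t$, and since $\|\Delta_t\|_W \le \|Y_t\|_W + \|Z_t\|_W$, one gets $\|h_t\|_W \le \kappa \|Y_t\|_W + \eta_t$ with $\eta_t := \kappa\|Z_t\|_W + c_t \to 0$ a.s. A standard contraction-with-vanishing-perturbation lemma for the coordinate recursion $Y_{t+1}(x) = (1-\alpha_t(x))Y_t(x) + \alpha_t(x) h_t(x)$ — proved by taking limsup of $\|Y_t\|_W$ and using $\sum_t \alpha_t(x) = \infty$ to force the one-step averaging effect — yields $\limsup_t \|Y_t\|_W \le \eta_\infty/(1-\kappa) = 0$. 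Combining with $\|Z_t\|_W \to 0$ gives $\Delta_t \to 0$ almost surely, which is the desired conclusion.
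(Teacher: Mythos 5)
The paper does not actually prove this lemma: it is imported verbatim from \cite{singh2000convergence} and used as a black box in the proof of Theorem 4.4, so there is no in-paper argument to compare yours against. What you have written is a reconstruction of the standard proof of the cited result, and it follows the canonical Jaakkola--Jordan--Singh / Bertsekas--Tsitsiklis template correctly: split $g$ into its conditional mean $h_t$ and a martingale-difference residual $w_t$, propagate the split through the linear recursion to get $\Delta_t = Y_t + Z_t$, kill $Z_t$ by a martingale argument using $\sum_t \alpha_t^2(x) < \infty$, establish almost sure boundedness of $\|\Delta_t\|_W$ so that condition 4 yields summable conditional variances, and finish with the $\limsup$-shrinkage iteration driven by $\kappa<1$ and $\sum_t \alpha_t(x)=\infty$. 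The one step that is genuinely thinner than it needs to be is boundedness: because the stepsizes $\alpha_t(x)$ are coordinate-dependent, the scalar drift inequality $E[\|\Delta_{t+1}\|_W^2 \mid P_t] \le (1-2\alpha_t(1-\kappa)+O(\alpha_t^2))\|\Delta_t\|_W^2 + O(\alpha_t^2)$ is not immediately available --- there is no single $\alpha_t$, and the maximum over $x$ does not commute with the coordinate-wise recursions, so a Robbins--Siegmund argument applied directly to $\|\Delta_t\|_W^2$ does not go through as stated. The standard repair is the rescaling device you allude to in passing: renormalize by a piecewise-constant envelope $G_t$ that is increased only when $\|\Delta_t\|_W$ exceeds it, show the normalized noise accumulates to something small on each constancy interval, and conclude the envelope changes finitely often. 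With that step carried out explicitly your argument is complete; as written it is a correct, well-organized sketch of the classical proof of a result the paper simply cites.
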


\subsection*{Proof for Theorem \ref{thm:convergence}.}
\begin{proof}
Following the same notation in Theorem \ref{thm:estimator bias}. Denote by $Q^{(k)}$ the Q-function in iteration $k$. Define 
\begin{align*}
    \bias_\kk(s,a) =\mathbb{E}[\widehat{\mathcal{T}}^\kk Q^\kk(s,a) - \mathcal{T}^{\phi^{t_k}}Q^\kk(s,a) | \omega_{t_k}].
\end{align*}
Then by Theorem \ref{thm:estimator bias} and observing $||Q^\kk|| < \frac{\Bar{R}}{1-\gamma}$ almost surely, we have 
almost surely,
\begin{align*}
        \lim_{k\rightarrow\infty} \text{bias}_\kk(s,a) = 0 \qquad \forall s\in\calS,a\in\calA.
\end{align*}

 Denote by $\Delta^{(k)} = Q^{(k)} - Q^{\omega,*}$. $\Delta^\kk$ satisfies 
\begin{align*}
    \Delta^{(k+1)} (s,a) = (1-\lambda_k)\Delta^{\kk} + \lambda_k {F}_\kk,
\end{align*}
where $F_\kk = \widehat{\mathcal{T}}^\kk Q^\kk -Q^{\omega,*} $.
Let $\mathcal{H}_\kk $ be the $\sigma$-field  generated by $\{\{\Delta_{(\kappa)}\}_{\kappa=1}^k, \{\lambda_{(\kappa)}\}_{\kappa=1}^k, \{F_{(\kappa)}\}_{\kappa=1}^{k-1}\}$. We have
{
\begin{align*} 
    &|\mathbb{E}[F_\kk(s,a) | \mathcal{H}_\kk] |\\
    =& |\mathbb{E}[\mathbb{E}[F_\kk(s,a)|\omega_{t_k},\mathcal{H}_\kk]|\mathcal{H}_\kk]|\\
    =& |\mathbb{E}[ \mathcal{T}^\kk(Q^\kk)(s,a) - Q^{\phi^{t_k},*}(s,a) + Q^{\phi^{t_k},*}(s,a) 
    -Q^{\omega,*}(s,a) + \text{bias}_\kk(s,a)|\mathcal{H}_\kk ]| \\
     =& |\mathbb{E}[\widehat{\mathcal{T}}^\kk(Q^\kk - Q^{\phi^{t_k},*}) |\mathcal{H}_\kk] 
      +  
      \mathbb{E}[  Q^{\phi^{t_k},*}(s,a) -Q^{\omega,*}(s,a)|\mathcal{H}_\kk ]+\mathbb{E}[  \text{bias}_\kk(s,a)|\mathcal{H}_\kk ]|  \\
    \le& \gamma \mathbb{E} [  \max_{s'\in\calS}|\max_{b\in\calA}Q^\kk(s',b) - \max_{b\in\calA}Q^{\phi^{t_k},*}(s',b)| |\mathcal{H}_\kk] 
    +\mathbb{E}[  ||Q^{\phi^{t_k},*} -Q^{\omega,*}||_\infty|\mathcal{H}_\kk ] + \mathbb{E}[|  \text{bias}_\kk(s,a)||\mathcal{H}_\kk ]\\
     \le&\gamma \mathbb{E} [ ||Q^\kk -Q^{\phi^{t_k},*}||_\infty |\mathcal{H}_\kk]+ 
      \mathbb{E}[ ||Q^{\phi^{t_k},*} -Q^{\omega,*}||_\infty|\mathcal{H}_\kk ] + \mathbb{E}[|  \bias_\kk(s,a)||\mathcal{H}_\kk ]\\
    \le &  \gamma \mathbb{E} [ ||Q^\kk -Q^{\omega,*}||\infty |\mathcal{H}_\kk]+
    (1+\gamma)\mathbb{E}[  ||Q^{\phi^{t_k},*} -Q^{\omega,*}||\infty |\mathcal{H}_\kk ]+ \mathbb{E}[|  \bias_\kk(s,a)||\mathcal{H}_\kk ]\\
    \le &\gamma ||\Delta^\kk||_\infty + (1+\gamma)\mathbb{E}[||Q^{\phi^{t_k},*} -Q^{\omega,*}||_\infty |\mathcal{H}_\kk ]
    + \mathbb{E}[|\bias_\kk(s,a)||\mathcal{H}_\kk ].\\
\end{align*} 
}
where $||\cdot||_\infty $ denote the entry-wise sup norm. The second equality holds since the Monto Carlo sampling procedure in iteration $k$ only depends on the posterior $\phi^{t_k}$ and sample size $N^\kk_{s,a}$, which are completely determined by the past observations $\omega_{t_k}$. By Proposition \ref{prop:posterior convergence} and Theorem \ref{thm:estimator bias}, we know both $||Q^{\phi^{t_k},*} -Q^{\omega,*}||_\infty$ and $ |\bias_\kk(s,a)|$ converge to $0$ almost surely. Since they are both bounded, by the bounded convergence theorem the last two terms in the last inequalities converge to zero almost surely. Furthermore, since 
$| F_\kk(s,a)| \le \Bar{R} + ||\Delta^\kk||_\infty, $ we have
\begin{align*}
    \text{Var}(F_\kk(s,a)|\mathcal{H}_\kk) &\le (\bar{R}+||\Delta^\kk||_\infty)^2 \\
    &\le (\bar{R}^2+1)(1+||\Delta^\kk||_\infty)^2.
\end{align*}
By Lemma \ref{lem: stochastic approximation}, we obtain $||\Delta^\kk||_\infty$ converges to $0$ almost surely, which completes the proof. 

\end{proof}

\section{Details of Numerical Examples} \label{sec: experiment details}

\textbf{Coin toss.} We consider stage-wise streaming observations with batch size $n(t)=1$ and stage-wise Q-learning with number of steps $m(t)=1$. The minimal sample size to estimate the Bellman operator $N=10$. Initial observed data batch size $n(0) = 10$. We set the radius of the KL ball and the Wasserstein ball to be $0.1$. The risk level for VaR and CVaR is set to $0.2$ in Figure \ref{fig:coin toss 0.2} and 0.4 in Figure \ref{fig:coin toss 0.4}. 

\textbf{Inventory Management.}  In Figure \ref{fig:inventory streaming},  We set $K=10$, $T=60, m(t) = n(t) = 5, n(0) = 20$, and $N=10$. The radius of the KL and the Wasserstein ball is $0.05$, and the risk level for VaR and CVaR is $0.2$. In Figure \ref{fig:inventory fixed}, we set the capacity $K=10$ and the size of the historical data set $n(0) = 30$. The policies are deployed in different environments, where the demand follows different truncated Poisson distributions with means ranging from $3$ to $5$.


\end{document}